\documentclass[11pt,a4paper]{article}

\usepackage[T1]{fontenc}
\usepackage[utf8]{inputenc}
\usepackage{mathpazo}
\usepackage{microtype}
\usepackage[margin=2.5cm,headheight=14pt]{geometry}
\usepackage{fancyhdr}
\usepackage{float,placeins,threeparttable}
\usepackage{amsmath,amssymb,amsthm,mathtools}
\usepackage{tikz,pgfplots}
\pgfplotsset{compat=1.18}
\usepgfplotslibrary{groupplots}
\usetikzlibrary{arrows.meta,positioning,shapes.geometric,
                fit,backgrounds,calc,matrix,chains,
                decorations.pathmorphing}
\usepackage{booktabs,multirow,array,tabularx}
\usepackage[dvipsnames,table]{xcolor}
\definecolor{lightblue}{HTML}{DBEAFE}
\definecolor{lightgray}{HTML}{F3F4F6}
\definecolor{darkblue}{HTML}{1E3A5F}
\definecolor{thmcolor}{HTML}{1D4ED8}
\definecolor{grayBound}{HTML}{6B7280}
\definecolor{blueD4}{HTML}{2563EB}
\definecolor{redD16}{HTML}{DC2626}
\definecolor{amberRef}{HTML}{D97706}
\usepackage{tcolorbox}
\tcbuselibrary{theorems,skins,breakable}
\usepackage{mdframed}
\usepackage{enumitem}
\usepackage{caption,subcaption}
\usepackage[numbers,sort&compress]{natbib}
\usepackage[colorlinks=true,
            citecolor=thmcolor,
            linkcolor=darkblue,
            urlcolor=darkblue]{hyperref}
\usepackage{cleveref}
\usepackage{algorithm2e}

\newtcbtheorem[number within=section]{theorem}{Theorem}{
  enhanced,breakable,colback=grayBound!15,colframe=grayBound,
  fonttitle=\bfseries,separator sign={.},
  attach boxed title to top left={yshift=-2mm,xshift=4mm},
  boxed title style={colback=grayBound}}{thm}

\newtcbtheorem[number within=section]{proposition}{Proposition}{
  enhanced,breakable,colback=grayBound!8,colframe=grayBound,
  fonttitle=\bfseries,separator sign={.},
  attach boxed title to top left={yshift=-2mm,xshift=4mm},
  boxed title style={colback=grayBound}}{prop}

\newtcbtheorem[number within=section]{lemma}{Lemma}{
  enhanced,breakable,colback=grayBound!6,colframe=grayBound,
  fonttitle=\bfseries,separator sign={.},
  attach boxed title to top left={yshift=-2mm,xshift=4mm},
  boxed title style={colback=grayBound}}{lem}

\newenvironment{keyinsight}[1]{%
  \begin{tcolorbox}[enhanced,breakable,
    colback=lightblue!60,colframe=thmcolor,
    fonttitle=\bfseries,title={Remark: #1},
    attach boxed title to top left={yshift=-2mm,xshift=4mm},
    boxed title style={colback=thmcolor}]%
}{\end{tcolorbox}}

\newtheorem{applemma*}{Lemma}
\newtheorem{apptheorem*}{Theorem}
\newtheorem{appprop*}{Proposition}
\newtheorem{appremark*}{Remark}

\crefname{tcb@cnt@theorem}{Theorem}{Theorems}
\crefname{tcb@cnt@proposition}{Proposition}{Propositions}
\crefname{tcb@cnt@lemma}{Lemma}{Lemmas}
\crefname{section}{Section}{Sections}
\crefname{figure}{Figure}{Figures}
\crefname{table}{Table}{Tables}

\DeclareMathOperator{\Tr}{Tr}
\newcommand{\POVM}{\mathcal{H}_{\mathrm{POVM}}}
\newcommand{\calF}{\mathcal{F}}

\newcommand{\nstar}{n^{*}}
\newcommand{\Sstar}{S^{*}}

\newcommand{\gapbound}{\mathcal{G}}

\title{%
  Measurement-Budget Allocation in Quantum Learning with Finite-Shot Generalization Guarantees}
\author{%
  Ferhat Ozgur Catak\\[2pt]
  \small Department of Electrical Engineering and Computer Science\\
  \small University of Stavanger, Norway\\
  \small \texttt{f.ozgur.catak@uis.no}}
\date{\small\today}

\begin{document}
\maketitle

% ── Abstract ─────────────────────────────────────────────────
\begin{abstract}
On near-term quantum hardware, estimating a Born probability requires repeated circuit executions. A quantum learning experiment with a fixed measurement budget $B$ must therefore decide how many distinct training states $n$ to use and how many shots $S$ to allocate to each state. We study this tradeoff for binary quantum classifiers with fixed or independently selected measurement operators $M$, where the ideal score is $\Tr(M\rho)$. We prove a distribution-free generalization bound that separates the finite-sample and finite-shot contributions. The sample term scales as $\sqrt{d/n}$, while the shot term scales as $\sqrt{(\log n)/S}$; under the constraint $B=nS$, these two terms move in opposite directions. Minimising a conservative closed-form surrogate of the bound gives the allocation rule $\nstar = 2\sqrt{2dB/\log(2B/\delta)}$ and $\Sstar = B/\nstar$. This surrogate has the same asymptotic scaling as the exact minimizer and yields a worst-case rate of $B^{-1/4}$. The guarantee is intentionally conservative, since it applies to the full class of binary quantum measurements. We complement the theory with PennyLane simulations using 2-qubit and 4-qubit variational quantum circuits on nine synthetic binary classification benchmarks. In all tested configurations, the one-sided empirical generalization gap remains below the theoretical bound. The result provides a conservative statistical guideline for allocating measurement budgets in finite-shot evaluation and pre-experimental planning for near-term quantum learning systems, complementing hardware-level scheduling and circuit-design considerations. Extending the guarantee to fully adaptive shot-noisy training remains an open problem.

\end{abstract}

\medskip\noindent\textbf{Keywords:} quantum machine learning, measurement
budget allocation, finite-shot generalization bounds, Rademacher complexity,
NISQ devices, resource planning.

% ============================================================
\section{Introduction}
\label{sec:intro}
% ============================================================

Near-term quantum devices are subject to constraints that do not arise in the same form on classical computing platforms \cite{preskill2018quantum,RevModPhys.94.015004,cerezo2021variational}. Gate times, coherence limits, and access restrictions on cloud-based quantum processors all limit the number of circuit executions available for a given experiment. For quantum learning, a particularly important constraint is the \emph{shot cost}: the number of repeated circuit executions required to estimate the expectation value of a measurement.

A quantum classifier assigns a score to a quantum state $\rho$ through a measurement operator $M$. The ideal score is the Born probability 

$$
p_M(\rho) = \Tr(M\rho) \in [0,1].
$$
This quantity is not observed directly. Instead, each circuit execution produces a binary outcome $Z \in {0,1}$ with $\Pr(Z=1)=\Tr(M\rho)$, and the probability must be estimated from repeated measurements. Estimating $\Tr(M\rho)$ to accuracy $\varepsilon$ requires on the order of $1/(4\varepsilon^2)$ independent shots. Thus, shot cost is not merely an implementation detail, but a statistical constraint imposed by the measurement process itself. Recent work on quantum neural networks and quantum learning has also emphasized that finite sampling noise is a fundamental statistical effect, even on otherwise ideal quantum devices, because expectation-value outputs must be estimated from repeated measurements~\cite{Kreplin2024reductionoffinite,2024arXiv241014654H}.

Shots should therefore be treated as a statistical resource in their own right, separate from the number of training states. In a typical quantum learning experiment, the total measurement budget is
$$
B = n \times S,
$$
where $n$ denotes the number of distinct training states and $S$ the number of shots assigned to each state. Standard statistical learning theory does not directly address how such a budget should be divided between these two quantities. It usually treats the number of training examples as the only statistical resource and assumes exact access to hypothesis evaluations, whereas finite-shot quantum experiments provide only noisy estimates of Born probabilities.

In many quantum machine learning studies, the common choice is to use a relatively large number of shots and a small number of training states~\cite{biamonte2017qml,schuld2018supervised,Kubler2020adaptiveoptimizer,2020arXiv200406252A}. This is understandable, partly because quantum circuit simulation is expensive and repeated measurements are often used to stabilise estimates. However, this choice is not usually derived from a statistical allocation principle. Under a fixed total budget $B$, it can place the experiment in a sample-limited regime, where increasing the number of training states would reduce the generalization error more effectively than the corresponding decrease in shots would increase the estimation error.

The central question is therefore how to allocate a fixed budget $B=nS$ between $n$ and $S$. A useful allocation must balance two sources of error: the error due to observing only finitely many training states, and the error due to estimating each quantum score from finitely many measurement outcomes.

\medskip
\noindent\textbf{Many training states, few shots per state.}
If $n$ is increased while the total budget $B$ is fixed, then $S=B/n$ decreases. Each estimate $\hat{p}_{M,S}(\rho_i)$ is then based on fewer measurement outcomes and becomes less reliable. In the limiting case $S=1$, the estimate is a single Bernoulli outcome with variance at most $1/4$, so the observed training scores carry substantial measurement noise.

\medskip
\noindent\textbf{Few training states, many shots per state.}
If $S$ is increased under the same fixed budget, then $n=B/S$ decreases. The Born probabilities are estimated more accurately, but the learner observes fewer distinct training states. In the limiting case $n=1$, shot noise can be made small, but the finite-sample component of the generalization gap remains of order $O(1)$.

\medskip
A useful allocation must therefore lie between these two regimes. Its position depends on the Hilbert-space dimension $d=2^k$ for $k$ qubits, the total budget $B$, and the confidence level $\delta$. The analysis below derives a closed-form surrogate allocation rule from a finite-shot generalization bound.

One might try to analyse the shot-estimated empirical risk using standard generalization theory. However, this does not by itself resolve the allocation problem.

The first difficulty is that the observed score $\hat{p}_{M,S}(\rho_i)$ is only an estimate of the Born probability $p_M(\rho_i)=\Tr(M\rho_i)$. Classical bounds usually assume that hypothesis values are evaluated exactly. In the finite-shot setting, the empirical loss is computed from noisy measurement estimates, and the size of this perturbation depends explicitly on $S$.

The second difficulty is that classical learning theory treats the number of training examples $n$ as the main statistical resource. It does not provide a rule for choosing both $n$ and $S$ under the constraint $nS=B$. Existing quantum generalization bounds~\cite{caro2022generalization,haug2024generalization} either work in the infinite-shot regime or keep the number of shots fixed. They therefore do not directly address how a finite measurement budget should be split between additional training states and more accurate measurement estimates. We address this issue by incorporating finite-shot noise into a Rademacher-complexity analysis.

We adapt the Rademacher-complexity framework~\cite{bartlett2002rademacher} to account for finite-shot quantum measurements. The first part of the analysis concerns the ideal, infinite-shot setting. For the measurement class
$$
\POVM={M:0\leq M\leq I},
$$
we show that
$$
\hat{\mathcal{R}}_n^{(\infty)}(\POVM)\leq \sqrt{d/n}.
$$
The proof uses the linearity of $\Tr(M\cdot)$ and the fact that, for a fixed Hermitian matrix, the optimizing measurement is the projector onto its positive eigenspace. Standard nuclear- and Frobenius-norm bounds then give the stated dependence on the Hilbert-space dimension $d$ and the number of training states $n$.

The second part of the analysis accounts for finite measurement shots. We bound the deviation between the ideal empirical risk and the shot-estimated empirical risk using Hoeffding concentration together with the Lipschitz property of the loss. Combining the sample and shot deviations yields a bound with two separate contributions: a finite-sample term controlled by $n$, and a finite-shot term controlled by $S$. Under the budget constraint $B=nS$, these terms lead to a closed-form surrogate allocation rule for choosing $n$ and $S$.

The resulting generalization bound has the structure:

\begin{equation}
  \underbrace{R_\ell(M)}_{\text{population risk}}
  \;\leq\;
  \underbrace{\hat{R}_{n,S,\ell}(M)}_{\text{observed empirical risk}}
  \;+\;
  \underbrace{2L\sqrt{\frac{d}{n}}}_{\substack{\text{sample term}\\\text{(classical)}}}
  \;+\;
  \underbrace{L\sqrt{\frac{\log(4n/\delta)}{2S}}}_{\substack{\text{shot term}\\\text{(quantum-native)}}}
  \;+\;
  \underbrace{\sqrt{\frac{\log(2/\delta)}{2n}}}_{\substack{\text{confidence}\\\text{term}}}
  \label{eq:intro-bound}
\end{equation}

Substituting $S=B/n$ makes the two terms move in opposite directions: the sample term decreases with $n$, while the shot term increases as fewer shots are assigned to each state. This gives a U-shaped upper bound as a function of $n$. To obtain a closed-form planning rule, we use the conservative replacement $\log(2n/\delta)\leq \log(2B/\delta)$ for $n\leq B$. Minimising the resulting surrogate bound gives

\begin{equation}
  \nstar = 2\sqrt{\frac{2dB}{\log(2B/\delta)}},
  \qquad
  \Sstar = \frac{B}{\nstar}
         = \frac{1}{2}\sqrt{\frac{B\log(2B/\delta)}{2d}}
\end{equation}

Both $\nstar$ and $\Sstar$ scale as $\sqrt{B/\log B}$.  At this surrogate allocation the worst-case bound decays at rate:

\begin{equation}
  \mathcal{G}(\nstar) = O\!\left(\!\left(\frac{d\log B}{B}\right)^{\!1/4}\right)
\end{equation}

The rate $B^{-1/4}$ is slower than the classical $B^{-1/2}$ rate, but the two rates are not measured against the same resource unit. In the quantum setting, $B$ counts individual binary measurement outcomes. In classical learning, a training example is usually treated as a complete labelled observation. The slower rate therefore reflects the way the measurement budget is counted, rather than a fundamental inefficiency of quantum learning.

For readers acquainted with classical learning theory, several structural parallels clarify the quantum setting.

The measurement class $\POVM={M:0\leq M\leq I}$ plays a role similar to bounded linear classifiers of the form ${x\mapsto \langle w,x\rangle:|w|\leq \Lambda}$. In both cases, the hypothesis is specified by a constrained linear functional, and the corresponding Rademacher complexity scales as $O(1/\sqrt{n})$, up to dimension-dependent factors. The quantum case differs in one essential respect: the value $\Tr(M\rho)$ is not directly observed, but estimated from binary measurement outcomes, while $\langle w,x_i\rangle$ is available exactly in the classical setting.

Shot noise can be viewed as a measurement-induced analogue of noise in the empirical loss. Its size decreases with the number of shots, typically at order $O(1/\sqrt{S})$. Unlike ordinary label noise, however, shot noise is not fixed by the data-generating process. It is partly controlled by the experimental design, since the practitioner chooses how many shots $S$ to allocate to each state.

The rate $B^{-1/4}$ also has an interpretation in terms of resource splitting. A fixed budget must be divided between observing more training states and estimating each quantum score more accurately. Optimising these two objectives jointly leads to a slower rate than optimising either one in isolation. This should be understood as a consequence of the budget model, where $B$ counts individual measurement outcomes, rather than as a fundamental limitation of quantum learning.

The guarantee has the same distribution-free character as classical Rademacher bounds. It applies to any distribution $\mathcal{P}$ over quantum states and labels, any $L$-Lipschitz loss, and any measurement $M\in\POVM$ chosen independently of the shot outcomes used to compute the empirical risk.

Summary of contributions are as follows
\begin{enumerate}[label=\textbf{C\arabic*.},leftmargin=*]

\item \textbf{Finite-shot generalization bound.}
Theorem~\ref{thm:main} gives a distribution-free bound~\eqref{eq:intro-bound} for quantum measurement hypotheses. The bound separates the finite-sample contribution, of order $O(\sqrt{d/n})$, from the finite-shot contribution, of order $O(\sqrt{(\log n)/S})$.

\item \textbf{Conservative allocation rule under a fixed measurement budget.}
Under the constraint $B=nS$, Theorem~\ref{thm:allocation} derives a closed-form allocation rule for a conservative surrogate of the finite-shot evaluation bound:
$$
n^{\ast}=2\sqrt{\frac{2dB}{\log(2B/\delta)}},
\qquad
S^{\ast}=\frac{B}{n^{\ast}}.
$$
This rule minimizes the surrogate upper bound for fixed or independently selected measurements. It should therefore be interpreted as a planning baseline for finite-shot evaluation, rather than as an optimality guarantee for fully adaptive shot-noisy training.

\item \textbf{Sample-limited and shot-limited regimes.}
The analysis identifies three regimes: sample-limited, shot-limited, and balanced. These regimes clarify how the bound changes when budget is moved between additional training states and additional shots per state. They also explain why configurations with large $S$ and small $n$, which are common in QML studies, may be sample-limited under a fixed budget.

\item \textbf{Empirical consistency checks.}
We test the bound using PennyLane simulations with 2-qubit ($d=4$) and 4-qubit ($d=16$) variational quantum circuits on nine synthetic binary classification benchmarks. Across the tested values of $n$, $S$, and $B$, the one-sided empirical generalization gaps remain below the theoretical bound. These experiments are used as consistency checks, not as evidence of tightness, since the tested VQCs form a restricted subclass of $\POVM$ on structured data.

\item \textbf{Implications for experiment design.}
The allocation rule provides a conservative starting point for choosing $n$ and $S$
when evaluating fixed or independently selected quantum hypotheses under a finite
measurement budget. For adaptive shot-noisy training, the same tradeoff is expected
to be relevant, but the present theorem does not provide a complete training-time
generalization guarantee.
\end{enumerate}

The PennyLane-based experimental code and synthetic-dataset generation scripts supporting the findings of this study are publicly available at GitHub \footnote{\url{https://github.com/ocatak/quantum-measurement-budget-allocation}}.

The remainder of the paper is structured as follows. \cref{sec:related} reviews related work. \cref{sec:system} introduces the system model, the two-gap decomposition, the main generalization bound in Theorem~\ref{thm:main}, its scope, and the surrogate allocation rule in Theorem~\ref{thm:allocation}. \cref{sec:experiments} presents the experimental results. \cref{sec:discussion} discusses implications, limitations, and open problems. \cref{sec:conclusion} concludes the paper.

% ============================================================
\section{Related Work}
\label{sec:related}
% ============================================================

We organise prior work around three capabilities that are required to answer
the budget allocation question:
(Q1)~Does the framework account for finite measurement shots?
(Q2)~Does it jointly optimise $n$ (training states) and $S$ (shots per state)?
(Q3)~Does it yield a closed-form allocation rule for fixed budget $B = nS$?
Table~\ref{tab:related} summarises the comparison; the paragraphs below
describe the focus of each prior approach and explain why the allocation
question under finite $B$ lies outside its scope.

\begin{table}[h]
\centering
\caption{Comparison of quantum generalization frameworks along three axes
  relevant to the budget allocation problem.
  \checkmark~= addressed; $\times$~= not in scope; $\sim$~= partially.}
\label{tab:related}
\setlength{\tabcolsep}{6pt}\renewcommand{\arraystretch}{1.3}
\begin{tabular}{@{}lcccc@{}}
\toprule
\textbf{Work} & \textbf{Method}
  & \textbf{(Q1) Finite shots?}
  & \textbf{(Q2) Joint $n$-$S$?}
  & \textbf{(Q3) Alloc.\ formula?} \\
\midrule
Caro et al.~\cite{caro2022generalization}
  & Covering numbers   & $\times$ & $\times$ & $\times$ \\
Haug \& Kim~\cite{haug2024generalization}
  & Quantum Fisher info & $\times$ & $\times$ & $\times$ \\
Huang et al.~\cite{huang2021information}
  & Info-theoretic     & $\times$ & $\times$ & $\times$ \\
Gil-Fuster et al.~\cite{gilfuster2024rethinking}
  & Expressibility     & $\sim$   & $\times$ & $\times$ \\
Bshouty \& Jackson~\cite{bshouty1998learning}
  & Quantum PAC        & $\times$ & $\times$ & $\times$ \\
\rowcolor{lightblue!40}
\textbf{This work}
  & Rademacher complexity & \checkmark & \checkmark & \checkmark \\
\bottomrule
\end{tabular}
\end{table}

\paragraph{Covering numbers for parameterised quantum circuits~\cite{caro2022generalization}.}
Caro et al.\ derive generalization bounds of order $O(\sqrt{T/n})$ for
parameterised quantum circuits with $T$ trainable gates, via covering numbers
in the infinite-shot limit ($S \to \infty$).  This work is complementary to
ours: it studies finite-sample generalization for PQC-induced function classes,
whereas we study finite-shot corrections for the full measurement class $\POVM$.
The covering-number metric is defined on exact hypothesis values and does not
extend to shot-estimated approximations, so the framework does not directly
yield guidance on $S$ or on the tradeoff under fixed $B$.

\paragraph{Quantum Fisher information~\cite{haug2024generalization}.}
Haug and Kim bound generalization error via the quantum Fisher information
(QFI) metric evaluated at a fixed parameter point.  QFI is a local quantity
that characterises parameter-space sensitivity and is well suited to the
study of trainability and barren plateaus.  Because QFI is defined in the
infinite-shot limit and provides a bound at a single parameter point rather
than over the full class $\POVM$, it does not yield allocation guidance
for finite $S$.  The two approaches are complementary: QFI bounds inform
circuit design and optimisation, while the present bound informs measurement
resource allocation.

\paragraph{Information-theoretic limits~\cite{huang2021information}.}
Huang, Kueng, and Preskill establish bounds on quantum advantage in terms of
the number of training examples required.  Their budget is a count of training
examples; ours is a count of measurement outcomes.  The two frameworks address
complementary aspects of quantum learning complexity and are non-overlapping
in scope.

\paragraph{Quantum PAC learning~\cite{bshouty1998learning,arunachalam2017survey}.}
The quantum PAC learning literature studies classical concept classes learned
from quantum example oracles, where the quantum resource is the oracle access
model.  Our setting is structurally complementary: quantum hypothesis classes
(measurement operators $M \in \POVM$) applied to quantum state data, with
explicit finite-shot accounting.  Neither framework subsumes the other.

\paragraph{Rethinking quantum generalization~\cite{gilfuster2024rethinking}.}
Gil-Fuster et al.\ raise important concerns about the direct transfer of
classical capacity measures (VC dimension, covering numbers) to quantum models.
The present work addresses one concrete instance of this challenge: the
Rademacher complexity of the full measurement class $\POVM$ is $\sqrt{d/n}$
in the infinite-shot limit, and the additive shot-noise correction is
$\sqrt{(\log n)/S}$. Rademacher complexity has also been used to quantify the statistical complexity of quantum circuits and parameterised quantum models, including encoding-dependent generalization bounds for PQCs and complexity analyses of noisy quantum circuits~\cite{PhysRevA.105.062431,2021arXiv210303139B,Caro2021encodingdependent}. The present analysis differs by considering the full binary measurement class  $\mathcal{H}_{\mathrm{POVM}}$ and by adding an explicit finite-shot perturbation term.

\paragraph{Shadow tomography~\cite{aaronson2018shadow,huang2020predicting}.}
The signed-average matrix $\bar{\rho}_{\boldsymbol{\sigma}} = \frac{1}{n}\sum_i \sigma_i\rho_i$ arising in the Rademacher complexity proof is structurally related to the classical shadow estimators used in shadow tomography.  Formalising this connection and determining whether shadow techniques could tighten the present bound is identified as an open problem in \cref{sec:discussion}.

Classical-shadow methods show that, for suitable randomized measurement schemes, many linear properties of a quantum state can be predicted from surprisingly few measurements~\cite{huang2020predicting}. 

\paragraph{Summary.}
Prior work on quantum generalization predominantly studies the infinite-shot
or fixed-shot regime and does not directly address the joint optimisation of
$n$ and $S$ under a fixed total budget $B = nS$.  The Rademacher complexity
framework, with shot noise modelled as a Lipschitz perturbation of the
empirical risk, makes this joint optimisation tractable and yields a
closed-form allocation rule.  The approaches are complementary rather than
competing: covering-number and QFI analyses address circuit expressibility
and optimisation, while the present work addresses measurement resource
allocation.

% ============================================================
\section{System Model and Main Results}
\label{sec:system}
% ============================================================

\subsection{Quantum Learning Setup}

\paragraph{Quantum training examples.}
A quantum training example is a pair $(\rho_i, y_i)$ where $\rho_i$ is a
\emph{density matrix}, a $d \times d$ positive semidefinite operator with
unit trace and $y_i \in \{0,1\}$ is its label.  The training set
$\mathcal{D} = \{(\rho_i, y_i)\}_{i=1}^n$ is drawn i.i.d.\ from an unknown
distribution $\mathcal{P}$ over $\mathcal{S}(\mathcal{V}) \times \{0,1\}$.

\paragraph{Quantum hypotheses.}
A quantum hypothesis is a measurement operator $M$ with $0 \leq M \leq I$,
where $I$ is the $d\times d$ identity.  Its prediction score on state $\rho$
is the Born probability $p_M(\rho) = \Tr(M\rho) \in [0,1]$.  The full
binary measurement hypothesis class is:
\[
  \POVM = \{M : 0 \leq M \leq I,\; M \in \mathbb{C}^{d\times d}\}
\]
For $k$ qubits, $d = 2^k$.

\paragraph{Finite-shot estimation.}
Performing $S$ independent applications of $\{M, I-M\}$ on state $\rho_i$
produces outcomes $Z_{i,1}, \ldots, Z_{i,S} \in \{0,1\}$.  The
shot-estimated score is:
\begin{equation}
  \hat{p}_{M,S}(\rho_i) = \frac{1}{S}\sum_{s=1}^S Z_{i,s},
  \qquad
  \mathbb{E}[\hat{p}_{M,S}(\rho_i)] = \Tr(M\rho_i),
  \qquad
  \mathrm{Var}[\hat{p}_{M,S}(\rho_i)] \leq \frac{1}{4S}
  \label{eq:shot-est}
\end{equation}

\paragraph{Loss function.}
We consider $L$-Lipschitz losses: $|\ell(p_1, y) - \ell(p_2, y)| \leq L|p_1 - p_2|$.  The squared loss $(p-y)^2$ satisfies this with $L=2$. The rescaled binary hinge loss used in Section~\ref{sec:experiments} is bounded in $[0,1]$ and satisfies the condition with $L=1$. The hard-threshold $0/1$ loss is excluded because its discontinuity near the decision boundary makes the finite-shot analysis intractable without an additional margin assumption.

\subsection{Three Layers of Statistical Error}

The learning pipeline involves three risk quantities, listed in Table~\ref{tab:three-risks}. Of these, only the shot-estimated empirical risk can be measured directly on a quantum device. The aim is therefore to control the population risk using this observable quantity.

\begin{table}[h]
\centering
\caption{Three layers of statistical error in quantum learning.}
\label{tab:three-risks}
\setlength{\tabcolsep}{7pt}\renewcommand{\arraystretch}{1.4}
\begin{tabular}{@{}llll@{}}
\toprule
\textbf{Quantity} & \textbf{Symbol} & \textbf{Definition} & \textbf{Observable?} \\
\midrule
Population risk
  & $R_\ell(M)$
  & $\mathbb{E}_{(\rho,y)\sim\mathcal{P}}[\ell(p_M(\rho),y)]$
  & No --- requires the full distribution \\
Ideal empirical risk
  & $\hat{R}_{n,\ell}(M)$
  & $\frac{1}{n}\sum_i \ell(p_M(\rho_i), y_i)$
  & No --- requires infinite shots \\
Shot-estimated empirical risk
  & $\hat{R}_{n,S,\ell}(M)$
  & $\frac{1}{n}\sum_i \ell(\hat{p}_{M,S}(\rho_i), y_i)$
  & \textbf{Yes} \\
\bottomrule
\end{tabular}
\end{table}

\subsection{Two-Gap Structure and Main Bound}

Figure~\ref{fig:pipeline} illustrates how the two statistical gaps arise in the quantum learning pipeline.

\begin{figure}[!h]
\centering
\resizebox{\textwidth}{!}{% Quantum learning pipeline — grayscale, professional typesetting
% Requires: tikz, \usetikzlibrary{arrows.meta,calc,positioning}
%
% Usage: \input{fig_pipeline.tex}  (wrap in figure/tikzpicture as needed)

\usetikzlibrary{arrows.meta,calc,positioning}

% ── Grayscale palette ──────────────────────────────────────────────
\definecolor{clbg}  {gray}{0.95}   % classical box fill
\definecolor{clbdr} {gray}{0.52}   % classical border
\definecolor{qtbg}  {gray}{0.82}   % quantum box fill
\definecolor{qtbdr} {gray}{0.32}   % quantum border
\definecolor{gapbg} {gray}{0.98}   % gap annotation fill
\definecolor{gapbdr}{gray}{0.58}   % gap annotation border
\definecolor{thmbg} {gray}{0.91}   % theorem box fill
\definecolor{thmbdr}{gray}{0.18}   % theorem box border
\definecolor{darrow}{gray}{0.52}   % dashed arrow colour
\definecolor{parrow}{gray}{0.12}   % pipeline arrow colour

\begin{tikzpicture}[
  font=\small\sffamily,
  %
  % ── Box styles ──────────────────────────────────────────────────
  % Classical pipeline box
  cbox/.style={
    draw=clbdr, fill=clbg, rounded corners=4pt,
    minimum width=2.2cm, minimum height=0.95cm,
    text centered, align=center,
    inner sep=5pt, line width=0.55pt
  },
  % Quantum pipeline box (darker fill + heavier border)
  qbox/.style={
    draw=qtbdr, fill=qtbg, rounded corners=4pt,
    minimum width=2.4cm, minimum height=0.95cm,
    text centered, align=center,
    inner sep=5pt, line width=0.9pt
  },
  % Gap annotation box (dashed)
  gbox/.style={
    draw=gapbdr, fill=gapbg, rounded corners=5pt, dashed,
    minimum width=3.5cm, minimum height=1.35cm,
    align=center, inner sep=7pt, line width=0.55pt
  },
  % ── Arrow styles ────────────────────────────────────────────────
  % Solid pipeline arrow
  arr/.style={
    -{Stealth[length=4.5pt, width=3.2pt]},
    color=parrow, line width=0.75pt
  },
  % Dashed explanatory arrow
  darr/.style={
    -{Stealth[length=3.5pt, width=2.5pt]},
    color=darrow, line width=0.55pt, dashed
  },
]

% ── TOP PIPELINE ROW ────────────────────────────────────────────────
\node[cbox] (A) at (0, 0)
  {Quantum example\\$(\rho_i, y_i)$};
\node[cbox, right=0.55cm of A] (B)
  {Hypothesis\\$M \in \mathcal{H}_{\mathrm{POVM}}$};
\node[cbox, right=0.55cm of B] (C)
  {Ideal score\\$\mathrm{Tr}(M\rho_i)$};
\node[qbox, right=0.55cm of C] (D)
  {Shot estimate\\$\hat{p}_{M,S}(\rho_i)$};

% ── BOTTOM PIPELINE ROW ─────────────────────────────────────────────
%   Anchored so D and E share the same horizontal centre.
\node[qbox, below=1.0cm of D] (E)
  {Shot-est.\ emp.\ risk\\$\widehat{R}_{n,S,\ell}(M)$};
\node[cbox, left=0.65cm of E,
      minimum width=2.35cm] (F)
  {Ideal emp.\ risk\\$\widehat{R}_{n,\ell}(M)$};
\node[cbox, left=0.65cm of F,
      minimum width=2.55cm] (G)
  {Population score risk\\$R_\ell(M)$};

% ── PIPELINE ARROWS ─────────────────────────────────────────────────
\draw[arr] (A) -- (B);
\draw[arr] (B) -- (C);
\draw[arr] (C) -- (D);
\draw[arr] (D) -- (E);   % vertical: quantum measurement
\draw[arr] (E) -- (F);
\draw[arr] (F) -- (G);

% ── GAP ANNOTATION BOXES ────────────────────────────────────────────
%   Finite-sample gap (classical)
\node[gbox, below=1.15cm of $(G.south)!0.4!(F.south)$,
      minimum width=3.8cm] (FG)
  {\textbf{Finite-sample gap}\\[3pt]
   $\displaystyle\leq 2L\sqrt{\tfrac{d}{n}}
     + \sqrt{\tfrac{\log(1/\delta)}{2n}}$\\[2pt]
   \textit{\footnotesize\color{clbdr}classical analogue}};

%   Finite-shot gap (quantum)
\node[gbox, below=1.15cm of $(F.south)!0.7!(E.south)$,
      minimum width=3.2cm] (SG)
  {\textbf{Finite-shot gap}\\[3pt]
   $\displaystyle\leq L\sqrt{\tfrac{\log(2n/\delta)}{2S}}$\\[2pt]
   \textit{\footnotesize\color{qtbdr}quantum-native}};

% ── EXPLANATORY DASHED ARROWS ────────────────────────────────────────
\draw[arr] (G) -- (FG);
\draw[arr] (F) -- (FG);
\draw[arr] (F) -- (SG);
\draw[arr] (E) -- (SG);

% ── THEOREM BOX ─────────────────────────────────────────────────────
\node[
  draw=thmbdr, fill=thmbg,
  rounded corners=4pt, line width=1.0pt,
  below=1.0cm of $(FG.south)!0.5!(SG.south)$,
  minimum width=12.0cm, minimum height=1.15cm,
  text width=11.7cm, align=center
] (THM) {%
  \textbf{Theorem~~\ref{thm:main}:}\\[4pt]
  $\displaystyle
    R_\ell(M)
    \;\leq\;
    \widehat{R}_{n,S,\ell}(M)
    \;+\; 2L\!\sqrt{\dfrac{d}{n}}
    \;+\; L\!\sqrt{\dfrac{\log(4n/\delta)}{2S}}
    \;+\; \sqrt{\dfrac{\log(2/\delta)}{2n}}$%
};

\end{tikzpicture}}
\caption{The quantum learning pipeline and its two sources of error. The finite-sample gap arises because the learner observes only $n$ training states from the data distribution. The finite-shot gap arises because each score $\Tr(M\rho_i)$ is estimated from $S$ binary measurement outcomes. Theorem~\ref{thm:main} bounds the combined effect of these two errors.}
\label{fig:pipeline}
\end{figure}

The capacity of $\POVM$ under random labelling is characterised by the following result, proved in Appendix~\ref{app:prop-idealrad}.

\begin{proposition}{Capacity of the quantum measurement class}{idealrad}
For $\POVM$ on a $d$-dimensional Hilbert space and any $n$ training states:
\begin{equation}
  \hat{\mathcal{R}}_n^{(\infty)}(\POVM) \leq \sqrt{\frac{d}{n}}
  \label{eq:rad-bound}
\end{equation}
\end{proposition}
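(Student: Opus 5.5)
The plan is to work directly from the definition of the empirical Rademacher complexity in the infinite-shot setting,
\[
  \hat{\mathcal{R}}_n^{(\infty)}(\POVM)
  = \mathbb{E}_{\boldsymbol{\sigma}}\Bigl[\sup_{0\le M\le I}\frac{1}{n}\sum_{i=1}^n \sigma_i \Tr(M\rho_i)\Bigr],
\]
with i.i.d.\ Rademacher signs $\sigma_i$ and the training states $\rho_1,\dots,\rho_n$ held fixed. Linearity of the trace lets me rewrite the inner sum as $\Tr(M\bar{\rho}_{\boldsymbol{\sigma}})$, where $\bar{\rho}_{\boldsymbol{\sigma}}=\frac1n\sum_i\sigma_i\rho_i$ is Hermitian. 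The supremum over $0\le M\le I$ of $\Tr(MA)$ for a Hermitian $A$ is attained at the projector onto the positive eigenspace of $A$. Its value is the sum of the positive eigenvalues, and this is at most $\|A\|_1$, the nuclear norm. So the first reduction is
\[
  \hat{\mathcal{R}}_n^{(\infty)}(\POVM)\le \mathbb{E}_{\boldsymbol{\sigma}}\|\bar{\rho}_{\boldsymbol{\sigma}}\|_1 .
\]

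Next I pass from the nuclear norm to the Frobenius norm using Cauchy--Schwarz on the $d$ eigenvalues: $\|A\|_1\le\sqrt d\,\|A\|_F$. Jensen's inequality then gives
\[
  \mathbb{E}\|\bar{\rho}_{\boldsymbol{\sigma}}\|_F\le\sqrt{\mathbb{E}\|\bar{\rho}_{\boldsymbol{\sigma}}\|_F^2}.
\]
Expanding the square, I get $\mathbb{E}\|\bar{\rho}_{\boldsymbol{\sigma}}\|_F^2=\frac{1}{n^2}\sum_{i,j}\mathbb{E}[\sigma_i\sigma_j]\Tr(\rho_i\rho_j)=\frac1{n^2}\sum_i\Tr(\rho_i^2)$. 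Each purity satisfies $\Tr(\rho_i^2)\le 1$, since $\rho_i\ge0$ and $\Tr\rho_i=1$. This gives $\mathbb{E}\|\bar{\rho}_{\boldsymbol{\sigma}}\|_F^2\le 1/n$, and combining the steps yields $\sqrt d\cdot\sqrt{1/n}=\sqrt{d/n}$, as claimed in \eqref{eq:rad-bound}.

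The only step needing care is the first one, the characterization of the supremum. I would justify it by diagonalizing $A=\sum_k\lambda_k|v_k\rangle\langle v_k|$ and writing $\Tr(MA)=\sum_k\lambda_k\langle v_k|M|v_k\rangle$. Since $0\le\langle v_k|M|v_k\rangle\le1$ for every $k$, each term is at most $\max(\lambda_k,0)$. The projector onto the positive eigenspace achieves this value, but only the upper bound is needed. The rest is routine.

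I would also note that one could sharpen the constant, for instance by bounding the positive part by half the nuclear norm using $\Tr(A)$ information. That refinement is not needed for the stated bound.
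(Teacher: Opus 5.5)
Your proposal is correct and follows the paper's proof essentially step for step. The steps are: linearity to the signed average $\bar{\rho}_{\boldsymbol{\sigma}}$, bounding the supremum by the positive-eigenvalue sum and hence the nuclear norm, the $\sqrt{d}$ nuclear-to-Frobenius step, Jensen, and sign cancellation with purity at most one. Your diagonalization argument for the inner supremum fills in a justification the paper only asserts. Your side remark is also sound, since $\mathbb{E}_{\boldsymbol{\sigma}}\Tr(\bar{\rho}_{\boldsymbol{\sigma}})=0$ gives an extra factor of $1/2$.
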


\noindent\textit{Proof sketch.}
For a fixed sign vector $\boldsymbol{\sigma}$, define
\begin{equation}
    \bar{\rho}*{\boldsymbol{\sigma}}=\frac{1}{n}\sum_i \sigma_i\rho_i    
\end{equation}
By linearity of the trace, the supremum over $M$ is an optimization over a Hermitian matrix. It is attained by projecting onto the positive eigenspace of $\bar{\rho}*{\boldsymbol{\sigma}}$. Hence the supremum is bounded by $|\bar{\rho}_{\boldsymbol{\sigma}}|*1$. Using
\begin{equation}
    |\bar{\rho}*{\boldsymbol{\sigma}}|*1 \leq \sqrt{d},|\bar{\rho}*{\boldsymbol{\sigma}}|*F
\end{equation}

and the sign cancellation in the Rademacher average, together with $|\rho_i|*F\leq 1$, gives

\begin{equation}
    \mathbb{E}*{\boldsymbol{\sigma}}!\left[|\bar{\rho}*{\boldsymbol{\sigma}}|_F\right]\leq \frac{1}{\sqrt{n}}
\end{equation}

This yields the stated bound; the full argument is given in Appendix~\ref{app:prop-idealrad}. \hfill$\square$

Combining this capacity bound with Hoeffding concentration for the shot
estimates gives the main theorem, proved in Appendix~\ref{app:thm-main}.

\begin{theorem}{Generalization bound with finite shots}{main}
Let $M \in \POVM$ be chosen independently of the shot outcomes used to compute $\hat{R}_{n,S,\ell}(M)$, and let $\ell:[0,1]\times\{0,1\}\to[0,1]$ be $L$-Lipschitz in its first argument. Then, with probability at least $1-\delta$ over the sample draw and the measurement outcomes,
\begin{equation}
  \underbrace{R_\ell(M)}_{\text{population risk}}
  \leq
  \underbrace{\hat{R}_{n,S,\ell}(M)}_{\text{observed empirical risk}}
  +
  \underbrace{2L\sqrt{\frac{d}{n}}}_{\substack{\text{sample term}\\\text{(classical)}}}
  +
  \underbrace{L\sqrt{\frac{\log(4n/\delta)}{2S}}}_{\substack{\text{shot term}\\\text{(quantum-native)}}}
  +
  \underbrace{\sqrt{\frac{\log(2/\delta)}{2n}}}_{\substack{\text{confidence}\\\text{term}}}
  \label{eq:main-bound}
\end{equation}
\end{theorem}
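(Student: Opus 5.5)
The plan is to split the gap into the two layers of Table~\ref{tab:three-risks} and control each on its own event with probability at least $1-\delta/2$. The decomposition is
\[
R_\ell(M)-\hat R_{n,S,\ell}(M)=\bigl[R_\ell(M)-\hat R_{n,\ell}(M)\bigr]+\bigl[\hat R_{n,\ell}(M)-\hat R_{n,S,\ell}(M)\bigr].
\]
A union bound over the two events then gives the total confidence $1-\delta$. This explains why the statement has $\log(2/\delta)$ in the confidence term and $\log(4n/\delta)$ in the shot term.

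\textbf{Sample gap.} I will apply the standard one-sided Rademacher bound to the loss class $\{(\rho,y)\mapsto \ell(\Tr(M\rho),y): M\in\POVM\}$, which takes values in $[0,1]$. Symmetrization followed by McDiarmid's inequality at confidence $\delta/2$ gives, uniformly over $M$,
\[
R_\ell(M)\le \hat R_{n,\ell}(M)+2\,\mathfrak{R}_n(\ell\circ\POVM)+\sqrt{\log(2/\delta)/(2n)}.
\]
Talagrand's contraction lemma, applied for each fixed label, gives $\mathfrak{R}_n(\ell\circ\POVM)\le L\,\mathfrak{R}_n(\POVM)$.

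The main technical point is the shift in the contraction step. $\ell(\cdot,y)-\ell(0,y)$ vanishes at $0$, and the constant $\ell(0,y_i)$ contributes zero in expectation against $\sigma_i$. The label dependence is handled by contracting coordinatewise with $\phi_i(t)=\ell(t,y_i)$, which Ledoux--Talagrand permits.

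With this, Proposition~\ref{prop:idealrad} applied conditionally on the sample, then taken in expectation, gives $\mathfrak{R}_n(\POVM)\le\sqrt{d/n}$. The sample term is therefore $2L\sqrt{d/n}$. Because this bound is uniform in $M$, it holds even if $M$ depends on the states.

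\textbf{Shot gap.} I will condition on the states $\rho_1,\dots,\rho_n$ and on $M$. Since $M$ is independent of the shot outcomes, for each $i$ the variables $Z_{i,1},\dots,Z_{i,S}$ are i.i.d.\ Bernoulli with mean $\Tr(M\rho_i)$. Two-sided Hoeffding gives
\[
\Pr\bigl(|\hat p_{M,S}(\rho_i)-\Tr(M\rho_i)|>\varepsilon\bigr)\le 2e^{-2S\varepsilon^2}.
\]
A union bound over the $n$ states at total level $\delta/2$ requires $2n e^{-2S\varepsilon^2}=\delta/2$, that is, $\varepsilon=\sqrt{\log(4n/\delta)/(2S)}$. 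On this event, the Lipschitz property gives
\[
|\hat R_{n,\ell}(M)-\hat R_{n,S,\ell}(M)|\le \frac1n\sum_i L\,|\hat p_{M,S}(\rho_i)-\Tr(M\rho_i)|\le L\varepsilon.
\]
Integrating over the conditioning preserves the probability bound.

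Adding the two bounds on the intersection of the two events yields~\eqref{eq:main-bound}.
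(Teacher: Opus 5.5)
Your proposal is correct and follows the paper's proof essentially step for step: it uses the same split into a sample event and a shot event at level $\delta/2$ each, the Rademacher bound with Lipschitz contraction and Proposition~\ref{prop:idealrad} for the sample term, and Hoeffding with a union bound over the $n$ states (giving $\log(4n/\delta)$) plus the Lipschitz transfer for the shot term. Your use of the expected rather than the empirical Rademacher complexity is legitimate, because the proposition's $\sqrt{d/n}$ bound holds for every sample, and it is the cleaner way to get the single confidence term $\sqrt{\log(2/\delta)/(2n)}$; the paper instead pairs the empirical complexity with that confidence term, which omits the extra concentration term the standard empirical-complexity bound carries.
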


\paragraph{Scope of the finite-shot guarantee.}
Theorem~\ref{thm:main} assumes that $M$ is chosen independently of the measurement outcomes used to compute the observed empirical risk $\hat{R}_{n,S,\ell}(M)$. This setting covers, for example, the evaluation of a fixed trained hypothesis using a separate shot budget, as well as pre-experimental resource planning for a given measurement class. If the same shot-noisy losses are also used to select $M$, as in adaptive training, this independence no longer holds. In that case, additional uniform-convergence or algorithm-specific arguments are needed to control the dependence between the selected hypothesis and the measurement noise. Extending the analysis to fully adaptive shot-noisy training remains an open problem.

Table~\ref{tab:bound-comparison} contrasts the structure of the bound with
its classical analogue.

\begin{table}[h]
\centering
\caption{Structural comparison of classical and quantum generalization bounds.}
\label{tab:bound-comparison}
\setlength{\tabcolsep}{7pt}\renewcommand{\arraystretch}{1.3}
\begin{tabular}{@{}lll@{}}
\toprule
\textbf{Property} & \textbf{Classical} & \textbf{Quantum (this work)} \\
\midrule
Score observation       & Direct               & $S$ binary shots (Born rule) \\
Sample error term       & $O(1/\sqrt{n})$      & $O(\sqrt{d/n})$ \\
Shot error term         & None                 & $O(\sqrt{(\log n)/S})$ \\
Resources to plan       & $n$ only             & $n$ \emph{and} $S$ \\
Surrogate budget-rate scaling     & $O(B^{-1/2})$        & $O(B^{-1/4})$ \\
\bottomrule
\end{tabular}
\end{table}

\subsection{Conservative Surrogate Budget Allocation}

Substituting $S = B/n$ into~\eqref{eq:main-bound} and dropping the confidence term yields a U-shaped function of $n$:

\begin{equation}
  \mathcal{G}(n) =
  \underbrace{\frac{2\sqrt{d}}{\sqrt{n}}}_{f_1(n)\;\downarrow}
  +
  \underbrace{\sqrt{\frac{n\log(2n/\delta)}{2B}}}_{f_2(n)\;\uparrow}
  \label{eq:tradeoff}
\end{equation}

\begin{figure}[h]
\centering
% Fig 2: Shot-sample tradeoff — verified coordinates, B=1000, delta=0.05
\begin{tikzpicture}
\begin{axis}[
  width=10cm, height=6.5cm,
  xlabel={Number of training examples $n$},
  ylabel={Generalization bound $\mathcal{G}(n)$},
  xmin=5, xmax=320,
  ymin=0.35, ymax=2.9,
  xtick={10,40,80,160,320},
  xticklabels={10,40,80,160,320},
  xmode=log,
  legend pos=north east,
  legend style={font=\small},
  grid=major, grid style={gray!25},
  tick label style={font=\small},
  label style={font=\small},
  title style={font=\small},
  title={Shot-sample tradeoff: $B=1000$, $\delta=0.05$},
]

% d=4  (n*=55)
\addplot[color=blue!70!black, thick, smooth] coordinates {
  (10,1.438)(15,1.252)(20,1.153)(25,1.094)(30,1.056)(35,1.032)
  (40,1.017)(50,1.002)(55,0.999)(60,1.000)(70,1.005)(80,1.015)
  (100,1.044)(120,1.078)(160,1.154)(200,1.231)(250,1.326)(300,1.418)
};
\addlegendentry{$d=4$ (2 qubits)}

% d=16  (n*=110)
\addplot[color=red!70!black, thick, smooth] coordinates {
  (10,2.703)(15,2.285)(20,2.047)(25,1.894)(30,1.787)(40,1.649)
  (50,1.567)(60,1.516)(70,1.483)(80,1.463)(100,1.444)(110,1.442)
  (120,1.443)(160,1.470)(200,1.514)(250,1.579)(300,1.649)
};
\addlegendentry{$d=16$ (4 qubits)}

% Optimal markers
\addplot[color=blue!70!black, only marks, mark=*, mark size=4pt]
  coordinates {(55,0.999)};
\node[above, font=\footnotesize, text=blue!70!black]
  at (axis cs:55,0.999) {$n^*\!=\!55$};

\addplot[color=red!70!black, only marks, mark=square*, mark size=3.5pt]
  coordinates {(110,1.442)};
\node[above right, font=\footnotesize, text=red!70!black]
  at (axis cs:110,1.442) {$n^*\!=\!110$};

% Vertical dashed lines
\addplot[color=blue!40, dashed, thin] coordinates {(55,0.35)(55,0.999)};
\addplot[color=red!40,  dashed, thin] coordinates {(110,0.35)(110,1.442)};

% Regime annotations
\node[font=\scriptsize, align=center, text=black!65]
  at (axis cs:10,0.8) {Sample-limited};
\node[font=\scriptsize, align=center, text=black!65]
  at (axis cs:200,2.0) {Shot-limited};
\draw[->, black!70, thin] (axis cs:17,0.8)  -- (axis cs:30,1.02);
\draw[->, black!70, thin] (axis cs:248,1.9) -- (axis cs:225,1.56);

\end{axis}
\end{tikzpicture}
\caption{Shot-sample tradeoff for $B=1000$ and $\delta=0.05$. The dashed vertical lines show the surrogate allocations $\nstar=55$ for $d=4$ and $\nstar=110$ for $d=16$. The exact minimizers of the original logarithmic objective can be computed numerically; the surrogate allocations are used here because they give a closed-form planning rule with the same asymptotic scaling. To the left of the allocation, the bound is mainly sample-limited, so increasing the number of training states is beneficial. To the right, the bound becomes shot-limited, as each state receives too few measurements for accurate score estimation.
}
\label{fig:tradeoff}
\end{figure}

To obtain a closed-form allocation rule, we replace $\log(2n/\delta)$ by $\log(2B/\delta)$ in~\eqref{eq:tradeoff}, yielding a conservative surrogate objective $\mathcal{G}_C(n) = 2\sqrt{d}/\sqrt{n} + \sqrt{n\log(2B/\delta)/(2B)}$ that has the same asymptotic scaling in $B$ as the original.  This substitution is a conservative upper bound on the original objective for $n \leq B$ (since $\log(2n/\delta) \leq \log(2B/\delta)$ whenever $n \leq B$), so the allocation derived from $\mathcal{G}_C$ remains a valid, if conservative, planning rule for the original bound.  The surrogate is used for interpretability and closed-form planning; the exact minimiser of the original logarithmic objective satisfies the implicit equation
\[
  n\bigl(\log(2n/\delta) + 1\bigr) = 2\sqrt{2dB\log(2n/\delta)},
\]
which can be solved numerically when high precision is required.  The resulting closed-form surrogate allocation is stated in Appendix~\ref{app:thm-allocation}.

\begin{theorem}{Conservative surrogate allocation rule}{allocation}
Under total budget $B = n \times S$, the allocation minimising the
conservative surrogate $\mathcal{G}_C(n)$ is:
\begin{equation}
  \nstar = 2\sqrt{\frac{2dB}{\log(2B/\delta)}},
  \qquad
  \Sstar = \frac{B}{\nstar}
         = \frac{1}{2}\sqrt{\frac{B\log(2B/\delta)}{2d}}
  \label{eq:opt-alloc}
\end{equation}
Both $\nstar$ and $\Sstar$ scale as $\sqrt{B/\log B}$.  At this surrogate
allocation:
\[
  \mathcal{G}_C(\nstar) = O\!\left(\!\left(\frac{d\log(2B/\delta)}{B}
  \right)^{\!1/4}\right)
\]
The exact minimiser of the original objective (with $\log(2n/\delta)$
treated as a function of $n$) satisfies an implicit equation and can be
solved numerically; the surrogate $\nstar$ above has the same asymptotic
scaling and is used here for planning and interpretability.
\end{theorem}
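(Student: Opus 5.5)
The plan is to treat $n$ as a continuous variable on $(0,B]$ and minimise the surrogate $\mathcal{G}_C$ directly. Write $c = \log(2B/\delta)/(2B)$, which does not depend on $n$ once the logarithm has been frozen at $\log(2B/\delta)$. The objective then becomes
\[
  \mathcal{G}_C(n) = 2\sqrt{d}\,n^{-1/2} + \sqrt{c}\,n^{1/2}.
\]
The first term is strictly convex on $(0,\infty)$ and the second is concave, so convexity of the sum is not immediate. I would therefore argue unimodality from the derivative rather than from convexity.

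Differentiating gives
\[
  \mathcal{G}_C'(n) = -\sqrt{d}\,n^{-3/2} + \tfrac12\sqrt{c}\,n^{-1/2}
  = n^{-3/2}\bigl(\tfrac12\sqrt{c}\,n - \sqrt{d}\bigr).
\]
This is negative for $n < 2\sqrt{d/c}$ and positive beyond it. Hence the unique minimiser is
\[
  \nstar = 2\sqrt{d/c} = 2\sqrt{2dB/\log(2B/\delta)},
\]
which is the stated rule. Then $\Sstar = B/\nstar$ simplifies algebraically to $\tfrac12\sqrt{B\log(2B/\delta)/(2d)}$. Both expressions visibly scale as $\sqrt{B/\log B}$ for fixed $d$ and $\delta$.

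For the value at the optimum, I would use the first-order condition. It gives $\sqrt{c}\,\nstar = 2\sqrt{d}$, and dividing by $\sqrt{\nstar}$ shows $\sqrt{c\,\nstar} = 2\sqrt{d}/\sqrt{\nstar}$. So the two terms are equal at the optimum, and
\[
  \mathcal{G}_C(\nstar) = 2\sqrt{c\,\nstar}
  = 2\sqrt{2}\,d^{1/4}c^{1/4}
  = 2\sqrt{2}\left(\frac{d\log(2B/\delta)}{2B}\right)^{1/4}.
\]
This is the claimed $O\bigl((d\log(2B/\delta)/B)^{1/4}\bigr)$ rate. Because $\log(2n/\delta)\le\log(2B/\delta)$ for $n\le B$, we have $\mathcal{G}(\nstar)\le\mathcal{G}_C(\nstar)$, so the original objective \eqref{eq:tradeoff} inherits the same rate at $\nstar$. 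For the remark about the exact minimiser, I would differentiate \eqref{eq:tradeoff} with the $n$-dependent logarithm included. That produces the implicit equation quoted before the theorem. Since the logarithm varies slowly, its root differs from $\nstar$ only by logarithmic factors, which gives the same $\sqrt{B/\log B}$ scaling.

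The main obstacle is feasibility rather than the calculus. The surrogate is only justified for $n\le B$, and the allocation must have $\Sstar\ge 1$ and integer values. I would check that $\nstar\le B$ holds precisely when $\Sstar\ge 1$, which amounts to requiring $B\log(2B/\delta)\ge 8d$. I would state this as the regime in which the rule applies. Rounding $\nstar$ to a nearby integer changes $\mathcal{G}_C$ by a lower-order amount, because $\mathcal{G}_C'(\nstar)=0$. The continuous minimiser is therefore kept as the planning rule.
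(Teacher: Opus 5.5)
Your computation matches the paper's Steps 2--5. This covers the critical point $n=a/b$ of $a n^{-1/2}+b n^{1/2}$ and the equality of the two terms there; your $2\sqrt{2}\,(d\log(2B/\delta)/(2B))^{1/4}$ equals the paper's $2^{5/4}(d\log(2B/\delta)/B)^{1/4}$. It also covers the comparison $\mathcal{G}(n^*)\le\mathcal{G}_C(n^*)$ for $n^*\le B$, the feasibility condition $B\log(2B/\delta)\ge 8d$, and the implicit first-order equation for the original objective. Your uniqueness argument via $\mathcal{G}_C'(n)=n^{-3/2}(\tfrac12\sqrt{c}\,n-\sqrt{d})$ is sounder than the paper's Step 1. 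That step infers a unique minimum merely from one term decreasing and the other increasing, which does not follow in general. As you note, $\mathcal{G}_C$ is not convex (it has an inflection at $n=3n^*$), so the sign analysis is the right tool.

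There is, however, one false step: ``both expressions visibly scale as $\sqrt{B/\log B}$.'' Your own formula $S^*=\tfrac12\sqrt{B\log(2B/\delta)/(2d)}$ gives $S^*\asymp\sqrt{B\log B}$, so $S^*/\sqrt{B/\log B}\to\infty$. That clause of the statement is wrong as written and cannot be proved; the paper's own conclusion says $S^*\propto\sqrt{B\log B}$. It must be corrected to $n^*\asymp\sqrt{B/\log B}$ and $S^*\asymp\sqrt{B\log B}$, with the logarithmic factors reciprocal because $n^*S^*=B$.

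Your justification of the exact-minimiser remark is also internally inconsistent. A root differing from $n^*$ ``by logarithmic factors'' would \emph{not} share the $\sqrt{B/\log B}$ scaling. The correct argument uses the implicit equation $n(L+1)=2\sqrt{2dBL}$ with $L=\log(2n/\delta)$, together with $n\le B$. These confine the root to $\sqrt{B/\log B}\lesssim n\lesssim\sqrt{B}$, so $L=\tfrac12\log B\,(1+o(1))$. Hence $n_{\mathrm{exact}}/n^*=\sqrt{\log(2B/\delta)\,L}/(L+1)\to\sqrt{2}$, which is a constant factor. The paper only asserts this scaling, so here you would be supplying a missing argument rather than departing from one.
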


Table~\ref{tab:regimes} summarises the three operating regimes and the
recommended actions; Table~\ref{tab:opt-alloc} provides numerical values
for representative budgets.

\begin{table}[h]
\centering
\caption{Operating regimes under fixed budget $B = n \times S$.}
\label{tab:regimes}
\setlength{\tabcolsep}{7pt}\renewcommand{\arraystretch}{1.3}
\begin{tabular}{@{}llll@{}}
\toprule
\textbf{Regime} & \textbf{Condition} & \textbf{Dominant term} & \textbf{Recommended action} \\
\midrule
Sample-limited
  & $n \ll \nstar$
  & $2\sqrt{d/n}$
  & Increase $n$, reduce $S$ \\
Balanced
  & $n \approx \nstar$
  & Both comparable
  & Near-optimal; increase $B$ if further reduction needed \\
Shot-limited
  & $n \gg \nstar$
  & $\sqrt{n\log n/B}$
  & Decrease $n$, increase $S$ \\
\bottomrule
\end{tabular}
\end{table}

\begin{table}[h]
\centering
\caption{Closed-form surrogate allocation and original logarithmic bound values
evaluated at the surrogate allocation $(n^\star,S^\star)$, with $\delta=0.05$.
The confidence term is excluded to isolate the dominant shot-sample tradeoff.
Here $f_1(n^\star)$ is the sample term and $f_2(n^\star)$ is the original
logarithmic shot term using $\log(2n^\star/\delta)$, not the conservative
surrogate $\log(2B/\delta)$.}
\label{tab:opt-alloc}
\begin{threeparttable}
\setlength{\tabcolsep}{5pt}\renewcommand{\arraystretch}{1.25}
\begin{tabular}{@{}rrrrrrrr@{}}
\toprule
$B$ & $d$ & $\nstar$ & $\Sstar$
    & $f_1(\nstar)$ & $f_2(\nstar)$ & \textbf{Bound} \\
\midrule
\rowcolor{lightblue!40}
  500 &  4 &  40 & 12 & 0.632 & 0.543 & 1.175 \\
  500 & 16 &  80 &  6 & 0.894 & 0.804 & 1.698 \\
\rowcolor{lightblue!40}
 1000 &  4 &  55 & 18 & 0.539 & 0.460 & 0.999 \\
 1000 & 16 & 110 &  9 & 0.763 & 0.679 & 1.442 \\
\rowcolor{lightblue!40}
 5000 &  4 & 114 & 44 & 0.375 & 0.310 & 0.685\tnote{$\dagger$} \\
 5000 & 16 & 229 & 22 & 0.529 & 0.457 & 0.986\tnote{$\dagger$} \\
\bottomrule
\end{tabular}
\begin{tablenotes}\footnotesize
  \item[$\dagger$] Theory-only rows.  Empirical consistency checks cover
  $B\in\{500,1000,2000\}$; the $B=5000$ entries illustrate the asymptotic
  $B^{-1/4}$ trend and were not included in the simulator experiments.
\end{tablenotes}
\end{threeparttable}
\end{table}

% ============================================================
\section{Experimental Consistency Checks}
\label{sec:experiments}
% ============================================================

\subsection{Objectives and Setup}

Two categories of experiment are conducted.  First, a \textbf{direct
numerical check} (Experiment~0) computes the empirical Rademacher complexity
of the full $\POVM$ class using the spectral formula of
Proposition~\ref{prop:idealrad}, verifying that empirical estimates remain
below the theoretical bound.  Second, \textbf{end-to-end VQC experiments}
(Experiments~1--4) train variational quantum circuits and check that the
empirical one-sided test-train gap remains below the theoretical bound.  As
VQCs constitute a restricted subclass of $\POVM$ on structured synthetic
data, these latter experiments serve as empirical consistency checks: they
demonstrate non-violation of the bound but should not be interpreted as
empirical tightness demonstrations.  The bound covers the worst case over
all of $\POVM$; a VQC subclass on easy structured data is a highly
favourable special case and will naturally produce gaps far below the
worst-case bound.

\medskip\noindent\textbf{Implementation.}
All experiments are implemented using
PennyLane~\cite{bergholm2018pennylane} with the \texttt{lightning.qubit}
simulator.  VQCs employ $R_Y$ angle embedding and
\texttt{BasicEntanglerLayers} with $L=2$ layers, optimised via gradient
descent on the hinge loss.  Test evaluation uses $n_\text{test}=500$
and $S_\text{test}=2000$ shots with fixed random seeds.

In the experiments we use the rescaled binary hinge loss
$$
\ell_{\mathrm{hinge}}(p,y)
=
\frac{1}{2}\max\{0,1-(2y-1)(2p-1)\},
$$
which is bounded in $[0,1]$ and is $1$-Lipschitz in $p$.

The use of finite-shot simulation is motivated by the fact that expectation-value based QNNs can suffer substantial output variance under finite sampling, even before hardware noise is included~\cite{DUTTA2025113058}.

\medskip\noindent\textbf{Benchmark datasets.}
Nine two-dimensional synthetic binary classification benchmarks are used:
three two-moons variants (\texttt{moons\_noise005/015/030}), two
concentric-circles variants (\texttt{circles\_factor03/06}), two Gaussian
benchmarks (\texttt{classification\_easy/hard}), and two blob benchmarks
(\texttt{blobs\_tight/spread}).

\medskip\noindent\textbf{System configurations.}
Experiments are conducted for $k=2$ qubits ($d=4$) and $k=4$ qubits ($d=16$).

\medskip\noindent\textbf{Validity metric.}
Theorem~\ref{thm:main} is a one-sided upper bound on
$R_\ell(M) - \hat{R}_{n,S,\ell}(M)$.  The appropriate empirical diagnostic
is the one-sided gap:
$$
  \widehat{\Delta}_{+}(n,S)
  = \max\{0,\; \hat{R}_\text{test} - \hat{R}_\text{train}\}
$$

No empirical violation is observed by checking whether  $\widehat{\Delta}_+(n,S)$ remains below the bound from Theorem~\ref{thm:main} in the tested configurations.  For the empirical bound diagnostic, the trained circuit is treated as fixed and the reported train/test risk estimates are computed using fresh measurement samples independent of the optimisation trajectory.  Accordingly, the VQC experiments are consistent with the theorem scope for the post-training evaluation use case, and should be interpreted as heuristic consistency checks rather than direct tests of the bound for adaptive shot-noisy training.

\subsection{Experiment 0: Direct Rademacher Complexity Check}

For $d \in \{4, 8, 16\}$ and $n \in \{10,20,40,80,160,320\}$, the empirical
Rademacher complexity is computed via the spectral formula: for each random
sign vector $\boldsymbol{\sigma}$, the supremum is attained by summing the
positive eigenvalues of the signed-average matrix
$A_\sigma = \frac{1}{n}\sum_i \sigma_i \rho_i$.

\begin{figure}[h]
\centering
\includegraphics[width=0.82\textwidth]{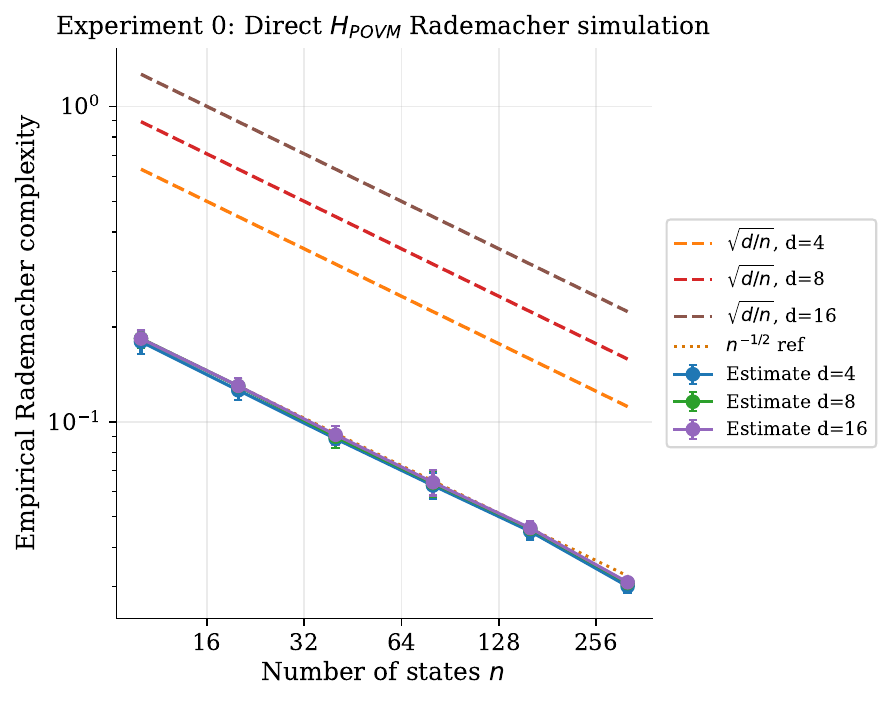}
\caption{\textbf{Experiment~0.}  Empirical Rademacher complexity of the full
$\POVM$ class as a function of $n$.  Dashed curves: $\sqrt{d/n}$ upper
bound.  Solid curves: empirical estimates (mean $\pm$ standard deviation over
random sign vectors).  All estimates remain below the theoretical bound and
follow the predicted $n^{-1/2}$ decay.}
\label{fig:exp0}
\end{figure}

\paragraph{Finding.}
The empirical estimates remain below the theoretical bound $\sqrt{d/n}$ for
all tested $(d,n)$ configurations.  This provides a direct numerical
consistency check of Proposition~\ref{prop:idealrad}.

\subsection{Experiment 1: Varying Number of Training States}

The number of shots is fixed at $S=100$.  The number of training states $n$
is swept over $\{10,20,40,80,160\}$ for $d \in \{4,16\}$ across all nine
benchmark datasets.

\begin{figure}[H]
\centering
\includegraphics[width=0.92\textwidth]{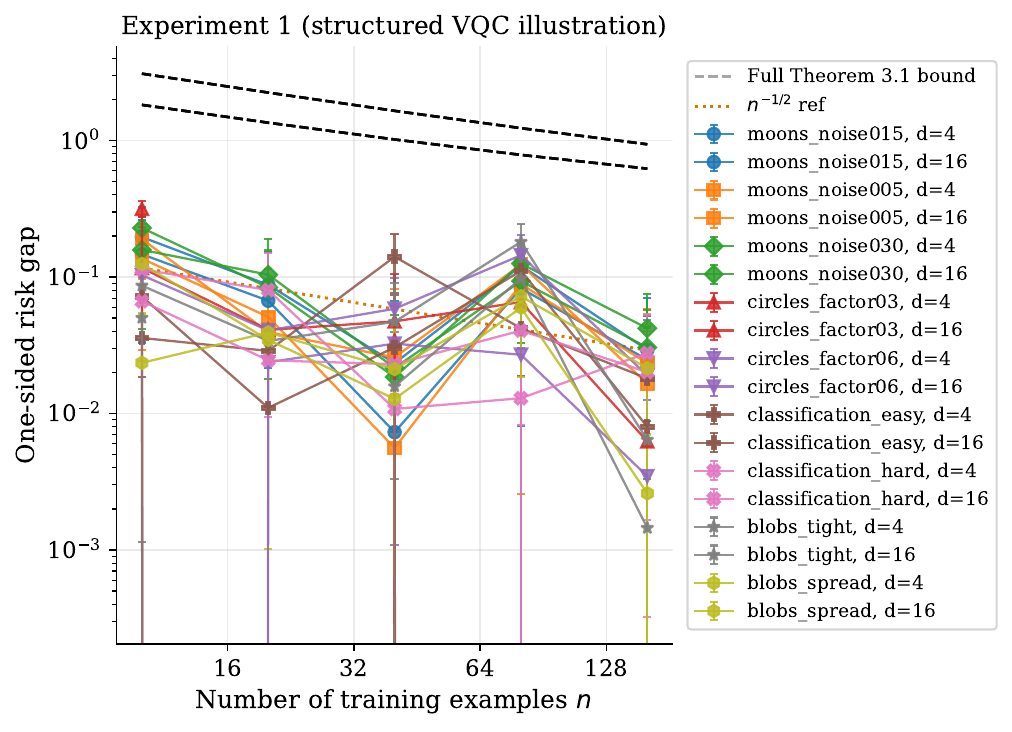}
\caption{\textbf{Experiment~1.}  One-sided empirical generalization gap as
a function of $n$.  Black dashed line: Theorem~\ref{thm:main} bound.
Dotted line: $n^{-1/2}$ reference slope.  All empirical gaps remain below
the theoretical bound across all benchmarks and system sizes.}
\label{fig:exp1}
\end{figure}

\begin{table}[H]
\centering
\caption{Experiment~1: aggregate log-log slope of the one-sided risk gap
as a function of $n$.  Reference slope predicted by theory: $-1/2$.}
\label{tab:exp1-slopes}
\setlength{\tabcolsep}{8pt}\renewcommand{\arraystretch}{1.25}
\begin{tabular}{@{}rrrr@{}}
\toprule
$d$ & Datasets & Mean slope & Std.\ (across datasets) \\
\midrule
\rowcolor{lightgray}
 $4$ & $9$ & $-0.607$ & $0.371$ \\
$16$ & $9$ & $-1.216$ & $1.365$ \\
\bottomrule
\multicolumn{4}{l}{\small Theoretical reference slope: $-0.5$.}
\end{tabular}
\end{table}

\paragraph{Finding.}
The non-violation condition is satisfied throughout.  For $d=4$, the
empirical log-log slope ($-0.607$) is broadly consistent with the theoretical
prediction of $-0.5$.  For $d=16$, the slope exhibits greater variance, as
empirical gaps are near-zero and VQC optimisation variance is the dominant
contributor; this case provides a qualitative consistency check only.

\subsection{Experiment 2: Varying Shots Per Training State}

The number of training states is fixed at $n=80$.  The shots per state $S$
is swept over $\{10,25,50,100,250,500\}$ for $d \in \{4,16\}$.

\begin{figure}[h]
\centering
\includegraphics[width=0.92\textwidth]{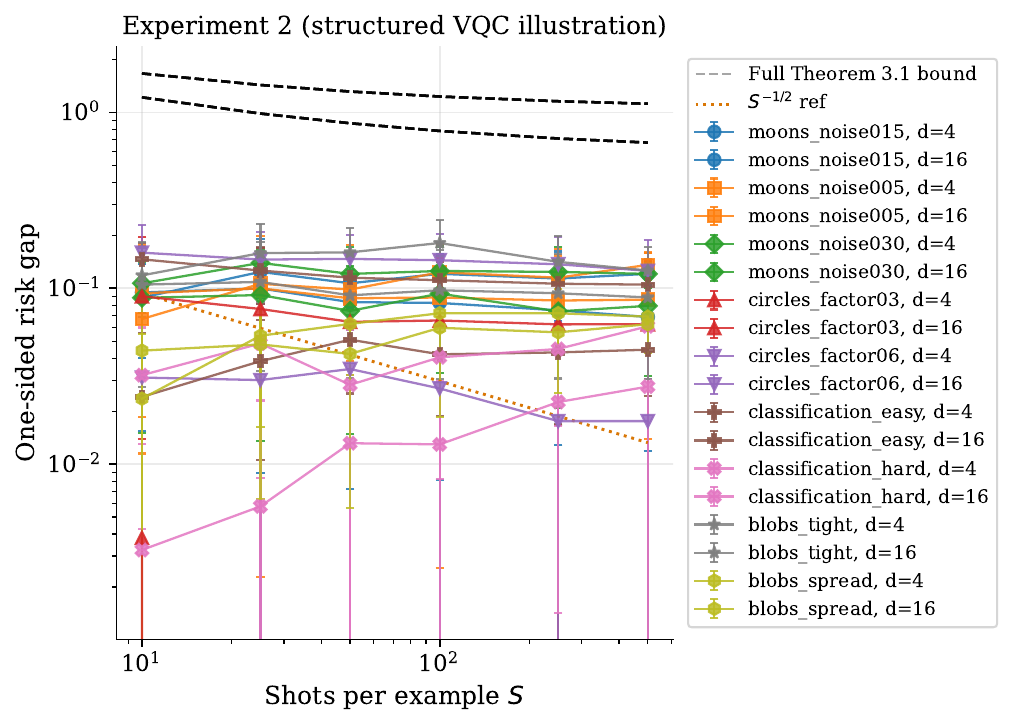}
\caption{\textbf{Experiment~2.}  One-sided empirical gap as a function of
$S$.  The theoretical shot term decays as $S^{-1/2}$.  Empirical VQC gaps
are approximately flat because the tested configurations lie within the
sample-limited regime at $n=80$, where the shot term is not the dominant
error source.}
\label{fig:exp2}
\end{figure}

\paragraph{Finding.}
The theoretical bound is not violated for any tested value of $S$.  The
approximate flatness of empirical gaps, rather than the $S^{-1/2}$ decay
of the shot term, is consistent with the VQCs being sample-limited at
$n=80$.

\subsection{Experiment 3: Fixed Total Budget}

The total budget $B$ is fixed at each value in $\{500, 1000, 2000\}$.
For each $(B,d)$ pair, the number of training states $n$ is swept on a
logarithmic grid with $S = \lfloor B/n \rfloor \geq 5$.

\begin{figure*}[t!]
\centering
\begin{subfigure}[t]{0.48\textwidth}
  \includegraphics[width=\textwidth]{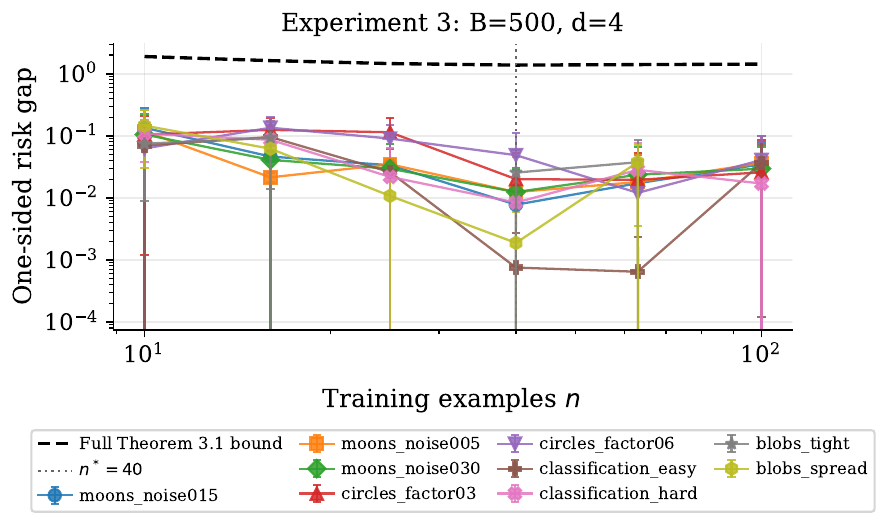}
  \caption{$B=500$, $d=4$}
\end{subfigure}\hfill
\begin{subfigure}[t]{0.48\textwidth}
  \includegraphics[width=\textwidth]{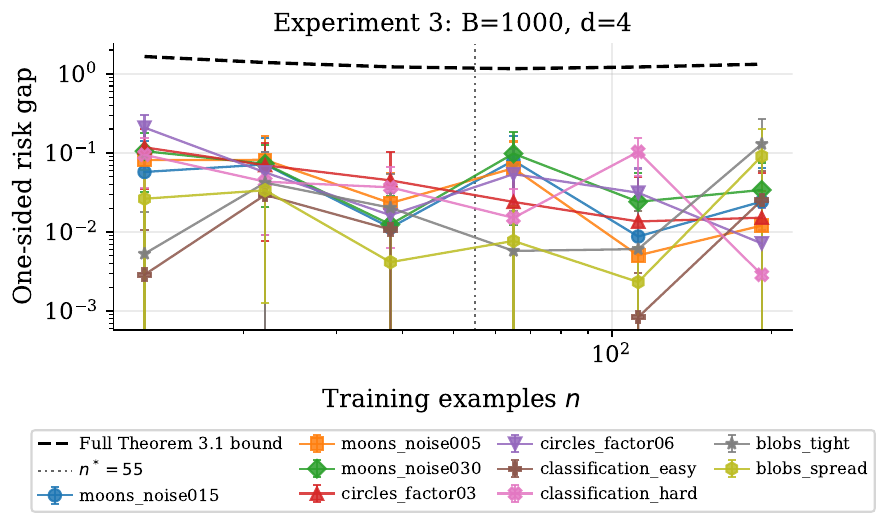}
  \caption{$B=1000$, $d=4$}
\end{subfigure}

\vspace{0.5em}

\begin{subfigure}[t]{0.48\textwidth}
  \includegraphics[width=\textwidth]{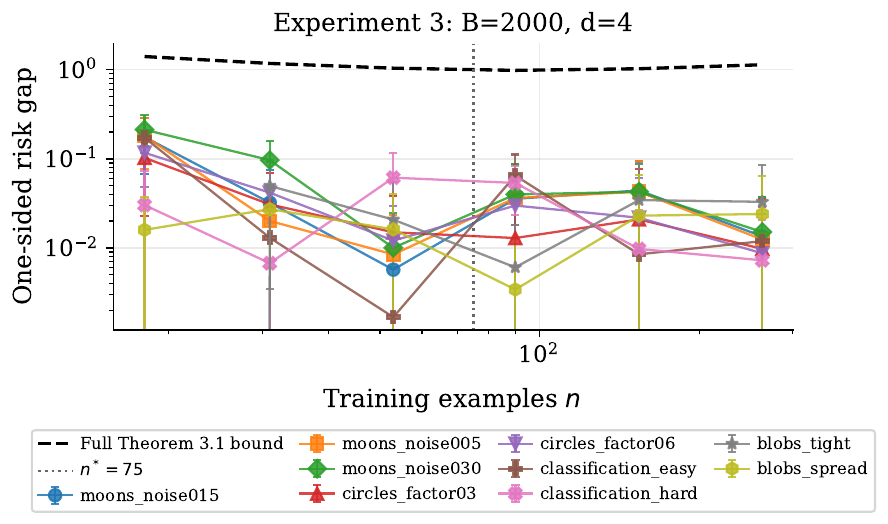}
  \caption{$B=2000$, $d=4$}
\end{subfigure}\hfill
\begin{subfigure}[t]{0.48\textwidth}
  \includegraphics[width=\textwidth]{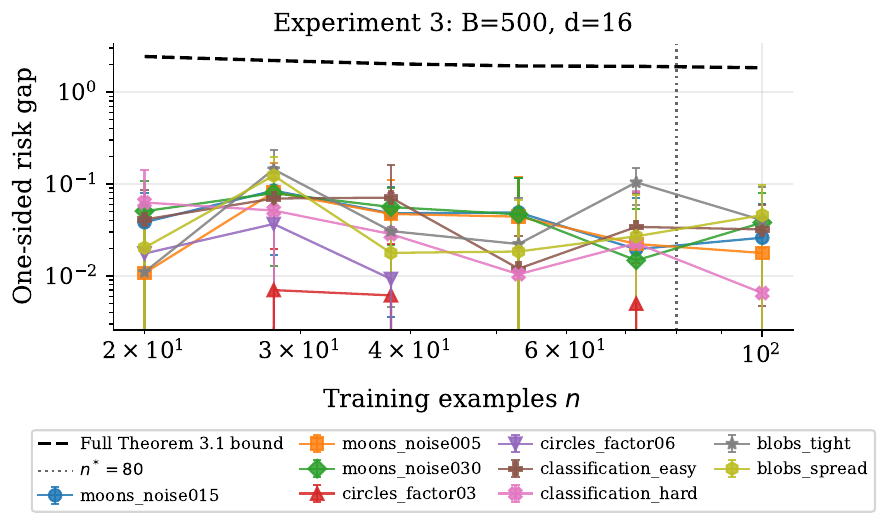}
  \caption{$B=500$, $d=16$}
\end{subfigure}

\vspace{0.5em}

\begin{subfigure}[t]{0.48\textwidth}
  \includegraphics[width=\textwidth]{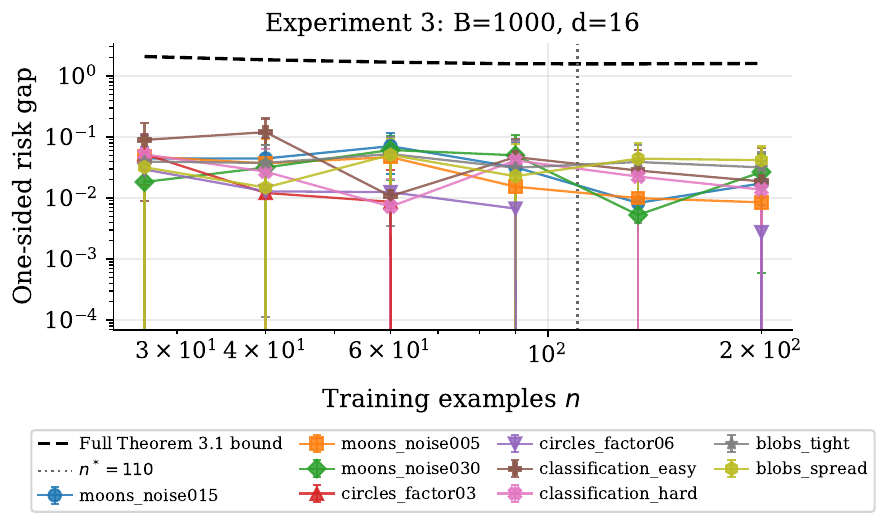}
  \caption{$B=1000$, $d=16$}
\end{subfigure}\hfill
\begin{subfigure}[t]{0.48\textwidth}
  \includegraphics[width=\textwidth]{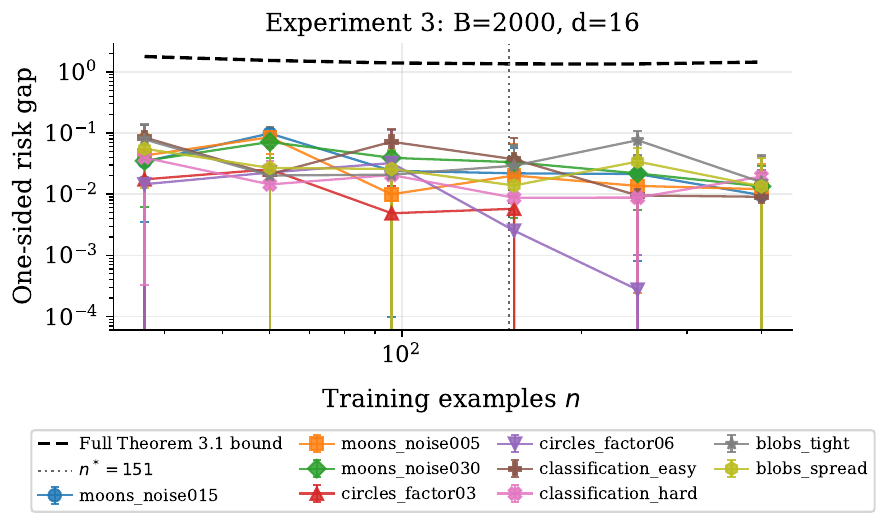}
  \caption{$B=2000$, $d=16$}
\end{subfigure}

\caption{\textbf{Experiment~3.}  One-sided empirical gap under fixed budget $B = nS$, as a function of $n$.  Coloured curves: empirical VQC gaps across nine benchmark datasets.  Black dashed line: Theorem~\ref{thm:main} bound. Dotted vertical line: closed-form surrogate allocation $\nstar$ illustrating the theoretical planning rule rather than a confirmed empirical minimum.  The U-shaped tradeoff is clearly visible in the theoretical bound; empirical VQC gaps are approximately flat because the benchmark tasks are insufficiently challenging to stress-test the bound near its minimum.}
\label{fig:exp3}
\end{figure*}

\paragraph{Finding.}
The theoretical bound is not violated in any configuration.  The U-shaped
tradeoff predicted by the analysis is clearly visible in the bound curves.
The empirical VQC gaps are too flat to precisely locate the empirical minimum,
as the structured synthetic tasks do not produce sufficiently large gaps; this
is consistent with the bound being conservative for this restricted subclass.

\subsection{Experiment 4: Bound-to-Gap Ratios}

\begin{table}[h]
\centering
\caption{Bound-to-gap ratios at the predicted allocation $(\nstar,\Sstar)$,
averaged over all nine benchmark datasets.  Entries with
$\widehat{\Delta}_+ < 10^{-3}$ are excluded as numerically unreliable.
Large ratios are expected and consistent with the bound being a
distribution-free worst-case guarantee over all of $\POVM$, while the VQC
subclass on structured synthetic data represents a highly favourable special
case.  Non-violation of the bound (slack $> 0$ in all cases) is the
primary validity criterion; the ratios quantify the degree of conservatism.}
\label{tab:tightness}
\setlength{\tabcolsep}{5pt}\renewcommand{\arraystretch}{1.25}
\begin{tabular}{@{}rrrrrlll@{}}
\toprule
$B$ & $d$ & $\nstar$ & $\Sstar$ & Bound & Gap & Slack & Ratio / reported \\
\midrule
\rowcolor{lightgray}
 500 &  4 &  40 & 12 & 1.427 & 0.0203 & 1.4118 & $94.15$ / $9/9$ \\
 500 & 16 &  80 &  6 & 1.901 & 0.0102 & 1.8917 & $268.40$ / $9/9$ \\
\rowcolor{lightgray}
1000 &  4 &  55 & 18 & 1.205 & 0.0162 & 1.1926 & $128.44$ / $9/9$ \\
1000 & 16 & 110 &  9 & 1.603 & 0.0063 & 1.5963 & $329.41$ / $8/9$ \\
\rowcolor{lightgray}
2000 &  4 &  75 & 26 & 1.028 & 0.0096 & 1.0209 & $181.22$ / $9/9$ \\
2000 & 16 & 151 & 13 & 1.363 & 0.0052 & 1.3576 & $463.29$ / $9/9$ \\
\bottomrule
\end{tabular}
\end{table}

\paragraph{Finding.}
Bound-to-gap ratios range from $94\times$ to $463\times$.  This degree of
conservatism is consistent with the bound being a distribution-free
worst-case guarantee over all of $\POVM$, whereas the VQC experiments employ
a restricted subclass on structured synthetic benchmarks where generalization
is inherently easier.  The slack is strictly positive in every configuration,
confirming non-violation in all cases.  These results are consistent with the
theoretical bound but should not be interpreted as empirical confirmation of
the $B^{-1/4}$ rate or as evidence of bound tightness.

\FloatBarrier

% ============================================================
\section{Discussion}
\label{sec:discussion}
% ============================================================

\subsection{Interpretation of the $B^{-1/4}$ Rate}

The $B^{-1/4}$ rate at the surrogate allocation is slower than the classical
$O(B^{-1/2})$.  This is a structural consequence of the two-resource budget
constraint and does not indicate that quantum learning is intrinsically less
efficient than classical learning.

\begin{keyinsight}{The two-resource bottleneck}
Classical learning is governed by a single resource ($n$ examples) and one
error term ($O(1/\sqrt{n})$).  The optimal rate is $B^{-1/2}$.

Quantum learning under a finite measurement budget $B$ is governed by two
resources ($n$ training states, $S$ shots per state) and two error terms
that trade off.  The surrogate joint optimum balances both at rate
$O(B^{-1/4})$.

The comparison is meaningful only when $B$ is measured in the same units.
In the quantum setting, $B$ counts individual binary measurement outcomes,
which are a finer unit than a classical training example.  At the level of
independent experimental runs---one quantum experiment
$= (\rho_i, S\text{-shots}) = $ one classical example---the two rates are
directly comparable.
\end{keyinsight}

\subsection{Implications for Quantum-Technology Resource Planning}

The surrogate allocation formula $(\nstar, \Sstar)$ provides a principled
statistical starting point for designing quantum learning experiments on
near-term devices.
Its practical use can be summarised as follows.

\paragraph{How to apply the surrogate allocation rule.}
Given a total measurement budget $B$ (e.g., a fixed number of circuit
executions on a cloud quantum platform), qubit count $k$, and confidence
parameter $\delta$, compute:
\[
  \nstar = 2\sqrt{\frac{2 \cdot 2^k \cdot B}{\log(2B/\delta)}},
  \qquad
  \Sstar = \frac{B}{\nstar}
\]
Collect $\nstar$ distinct training states and devote $\Sstar$ shots to each.
This allocation minimises the conservative surrogate planning bound derived
from Theorem~\ref{thm:main}; the exact minimizer of the original objective
can be found numerically.

\paragraph{Sample-limited regime.}
If the current experimental configuration has $n \ll \nstar$ (few training
states relative to the budget), the dominant source of generalization error
is the finite-sample term $2\sqrt{d/n}$.  Reallocating budget from shots to
training states---reducing $S$ and increasing $n$---will improve the
worst-case bound.  This regime is common in practice, as large $S$ is
conventional in QML.

\paragraph{Shot-limited regime.}
If the current configuration has $n \gg \nstar$ (many training states
relative to the budget), the dominant source of error is the shot term
$\sqrt{n \log n / B}$.  Reallocating toward fewer training states with more
shots each will improve the bound.

\paragraph{Relevance to cloud quantum devices and queue-limited experiments.}
On cloud quantum platforms, circuit executions are metered, queued, and
subject to device availability.  The allocation formula provides a principled
basis for partitioning a fixed execution quota.  For queue-limited experiments
where additional states can be prepared more easily than additional shots can
be executed (or vice versa), the regime characterisation in
Table~\ref{tab:regimes} identifies which resource is the current bottleneck
and which adjustment is warranted.

\paragraph{Relevance to quantum simulator studies.}
On classical simulators of quantum circuits (as used in the present
experiments), shots are cheaper but still consume computation time.  The
allocation formula provides the same guidance: simulator studies that use
very large $S$ and small $n$ may be operating in the sample-limited regime
and could obtain better worst-case guarantees by redistributing their
simulation budget.

\paragraph{Scope of the surrogate formula.}
The surrogate allocation formula $(\nstar, \Sstar)$ is a \emph{statistical
planning rule}, not a hardware scheduler.  It does not account for gate
errors, decoherence, readout errors, or queue constraints.  Incorporating
these factors into the allocation framework is identified as future work.
On current NISQ devices, hardware noise imposes a noise floor on top of the
statistical shot term; the formula should be regarded as an idealized lower
bound on the required shot count, to be adjusted upward based on
device-specific noise characterisation.

This distinction is important because hardware noise can impose limitations that are qualitatively different from statistical shot noise, including readout errors, gate noise, decoherence, and noise-induced trainability effects in variational algorithms~\cite{wang2021noise,RevModPhys.94.015004}.

\subsection{Limitations}

\paragraph{L1. Conservative bound.}
The bound is distribution-free and therefore conservative.  It covers the
worst case over the full class $\POVM$, all distributions $\mathcal{P}$,
and all training sets of size $n$.  For structured subclasses (e.g., Pauli
measurements, shallow PQCs) on structured data distributions, the effective
complexity may be substantially smaller than $\sqrt{d/n}$, and the bound
will over-estimate the true generalization gap.  This conservatism is an
inherent feature of distribution-free worst-case analysis.

\paragraph{L2. No hardware noise model.}
The current analysis isolates statistical shot noise and does not model
hardware noise, readout error, gate error, decoherence, or calibration
drift.  On physical NISQ devices, these mechanisms impose additional noise
floors that are not captured by the shot term.  Incorporating hardware
noise into the allocation framework is essential before the formula can
serve as a practical guide for physical quantum experiments, and is
identified as important future work.

\paragraph{L3. Fixed measurement operator.}
The theorem requires $M$ to be selected independently of the shot outcomes
used to form $\hat{R}_{n,S,\ell}(M)$.  This covers resource planning and
post-training evaluation but does not directly cover adaptive training
procedures in which $M$ is iteratively updated using the same shot-noisy
empirical losses.  Extending the result to fully adaptive, shot-noisy
training algorithms requires additional uniform convergence or
algorithm-specific arguments.

\paragraph{L4. Restricted hypothesis class in experiments.}
The VQC experiments test consistency of the bound for a restricted subclass
of $\POVM$ on synthetic structured data.  They confirm non-violation but
are not designed to test tightness.  Experiments designed to probe the
bound near its worst-case configurations---e.g., random measurements on
high-entropy states---are identified as future work.

\paragraph{L5. Nuclear-to-Frobenius looseness.}
The step $\|A\|_1 \leq \sqrt{d}\|A\|_F$ in the Rademacher complexity proof
is tight only when all singular values are equal.  For low-rank or structured
signed-average matrices, the effective dimension may be substantially smaller
than $d$.  Replacing $d$ with an effective rank or a PQC-specific measure is
an open problem.

\paragraph{L6. Non-adaptive measurement model.}
The analysis assumes measurement bases are selected independently of previous
outcomes (i.i.d.\ shot model).  Adaptive measurement strategies---where the
measurement basis is updated based on previous outcomes---may reduce the shot
term below the Hoeffding bound.  Whether adaptivity can recover the classical
$B^{-1/2}$ rate, and whether the no-cloning theorem imposes a fundamental
barrier, is an open question.

\subsection{Open Problems}

\begin{description}[leftmargin=0pt]
  \item[OP1.] Establish minimax lower bounds for
              $\hat{\mathcal{R}}_n^{(\infty)}(\POVM)$ using quantum
              Fano-type arguments; determine whether $\sqrt{d/n}$ is tight.
  \item[OP2.] Derive Rademacher complexity bounds for PQC-induced hypothesis
              classes via the Lipschitz structure of the parameter-shift rule,
              targeting bounds of order $O(\Lambda L/\sqrt{n})$ that are
              independent of $d$.
  \item[OP3.] Extend the allocation analysis to incorporate hardware noise
              models (decoherence, readout error) and determine how the
              surrogate planning allocation shifts in the presence of a
              noise floor.
  \item[OP4.] Determine whether adaptive measurement strategies can improve
              the $B^{-1/4}$ rate, and whether the no-cloning theorem
              imposes a fundamental information-theoretic barrier.
  \item[OP5.] Extend the result to adaptive, shot-noisy training procedures
              using uniform convergence or algorithm-dependent arguments.
  \item[OP6.] Extend the bounds to quantum covariate shift under
              channel-induced state drift.
  \item[OP7.] Generalise the Rademacher analysis to multi-outcome POVMs and
              observable regression tasks.
  \item[OP8.] Formalise the connection between the signed-average matrix
              $\bar{\rho}_{\boldsymbol{\sigma}}$ and classical shadow
              tomography estimators.
\end{description}

% ============================================================
\section{Conclusion}
\label{sec:conclusion}
% ============================================================

This study addresses a practical design question for quantum learning experiments: given a finite total measurement budget $B$ on a quantum device or simulator, how should it be allocated between the number of distinct training states $n$ and the number of shots $S$ per state?

The main theoretical result is a distribution-free generalization bound for quantum classifiers (Theorem~\ref{thm:main}) that simultaneously accounts for both statistical resources.  The bound decomposes into a classical sample term $O(\sqrt{d/n})$ and a quantum-native shot term $O(\sqrt{(\log n)/S})$. Neither term can be derived from the other, and classical learning theory provides no analogue of the shot term.  Minimising a conservative closed-form surrogate of the bound over $n$ under the constraint $B = nS$ yields the allocation rule $\nstar = 2\sqrt{2dB/\log(2B/\delta)}$, $\Sstar = B/\nstar$, with the worst-case surrogate bound decaying at rate $O(B^{-1/4})$ at the surrogate optimum.

The bound is conservative, as is expected for a distribution-free worst-case result covering the full class $\POVM$.  Empirical consistency checks using PennyLane on 2-qubit and 4-qubit variational quantum circuits across nine synthetic classification benchmarks show that all one-sided empirical generalization gaps remain below the theoretical bound in every tested configuration of $n$, $S$, and $B$; no violation is observed.

The practical guidance the result offers is as follows.  A \emph{sample-limited} configuration (few training states, many shots per state) should reallocate budget toward additional training states.  A \emph{shot-limited} configuration (many states, few shots each) should do the opposite.  The surrogate allocation prescribes $\nstar \propto \sqrt{B/\log B}$ training states, each measured $\Sstar \propto \sqrt{B \log B}$ times.

The guarantee applies to fixed or independently selected measurement hypotheses, including post-training evaluation with fresh shots; it does not provide a complete generalization guarantee for fully adaptive shot-noisy training.

Future directions include incorporating hardware noise models (decoherence, readout and gate errors) into the allocation framework, extending the result to adaptive shot-noisy training procedures, deriving tighter effective-dimension bounds for PQC-induced hypothesis subclasses, and validating the allocation principle on physical quantum devices.

\bibliographystyle{unsrtnat}
\bibliography{references}

% ── Declarations ─────────────────────────────────────────────
\bigskip
\noindent\textbf{Data availability.}
The synthetic datasets used in this study are generated using the procedures
described in \cref{sec:experiments}.  The generated data and scripts are
available from the author upon reasonable request.

\bigskip
\noindent\textbf{Code availability.}
The PennyLane-based experimental code supporting the findings of this study
is available from the author upon reasonable request.

\bigskip
\noindent\textbf{Competing interests.}
The author declares no competing interests.

\bigskip
\noindent\textbf{Funding.}
The author received no specific funding for this work.

\bigskip
\noindent\textbf{AI assistance disclosure.}
During manuscript preparation, language-editing and drafting assistance tools
were used to improve clarity and structure.  The author reviewed all content
and takes full responsibility for the scientific accuracy, completeness, and
integrity of the manuscript.  No language model is listed as an author or
contributor.

\newpage
\appendix

% ============================================================
\section{Proof of Proposition~\ref{prop:idealrad}}
\label{app:prop-idealrad}
% ============================================================

\noindent\textbf{Statement.}  The quantum measurement class $\POVM$ cannot
correlate with arbitrary $\pm 1$ labels better than $\sqrt{d/n}$ in
expectation over the random labelling.

\begin{appprop*}[restated]
$\hat{\mathcal{R}}_n^{(\infty)}(\POVM) \leq \sqrt{d/n}$.
\end{appprop*}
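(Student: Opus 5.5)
I will start from the definition of the empirical Rademacher complexity of the ideal score class,
\[
\hat{\mathcal{R}}_n^{(\infty)}(\POVM)=\mathbb{E}_{\boldsymbol{\sigma}}\Bigl[\sup_{M\in\POVM}\frac{1}{n}\sum_{i=1}^n\sigma_i\Tr(M\rho_i)\Bigr],
\]
with $\sigma_i$ i.i.d.\ uniform on $\{\pm1\}$ and the states $\rho_1,\dots,\rho_n$ held fixed. For a fixed sign vector, linearity of the trace rewrites the inner objective as $\Tr(M\bar\rho_{\boldsymbol\sigma})$, where $\bar\rho_{\boldsymbol\sigma}=\frac1n\sum_i\sigma_i\rho_i$ is Hermitian.

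\textbf{Step 1 (the supremum).} I diagonalise $\bar\rho_{\boldsymbol\sigma}=\sum_j\lambda_j|v_j\rangle\langle v_j|$. For any $0\le M\le I$,
\[
\Tr(M\bar\rho_{\boldsymbol\sigma})=\sum_j\lambda_j\langle v_j|M|v_j\rangle .
\]
Each coefficient $\langle v_j|M|v_j\rangle$ lies in $[0,1]$. The sum is therefore at most $\sum_{\lambda_j>0}\lambda_j$, and this value is attained by the projector onto the positive eigenspace. This sum is in turn at most $\sum_j|\lambda_j|=\|\bar\rho_{\boldsymbol\sigma}\|_1$. A slightly sharper bound is available: because $\Tr\bar\rho_{\boldsymbol\sigma}=\frac1n\sum_i\sigma_i$, the positive part equals $\frac12(\|\bar\rho_{\boldsymbol\sigma}\|_1+\Tr\bar\rho_{\boldsymbol\sigma})$. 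The cruder nuclear-norm bound is enough for the claim.

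\textbf{Step 2 (dimension factor).} Since $\bar\rho_{\boldsymbol\sigma}$ has at most $d$ eigenvalues, Cauchy--Schwarz on the eigenvalue vector gives
\[
\|\bar\rho_{\boldsymbol\sigma}\|_1\le\sqrt d\,\|\bar\rho_{\boldsymbol\sigma}\|_F .
\]

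\textbf{Step 3 (sign cancellation).} By Jensen, $\mathbb{E}_{\boldsymbol\sigma}\|\bar\rho_{\boldsymbol\sigma}\|_F\le(\mathbb{E}_{\boldsymbol\sigma}\|\bar\rho_{\boldsymbol\sigma}\|_F^2)^{1/2}$. Expanding the square in the Hilbert--Schmidt inner product,
\[
\mathbb{E}_{\boldsymbol\sigma}\|\bar\rho_{\boldsymbol\sigma}\|_F^2=\frac1{n^2}\sum_{i,j}\mathbb{E}[\sigma_i\sigma_j]\Tr(\rho_i\rho_j)=\frac1{n^2}\sum_i\Tr(\rho_i^2).
\]
Here the cross terms vanish because $\mathbb{E}[\sigma_i\sigma_j]=\delta_{ij}$. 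Purity satisfies $\Tr\rho_i^2\le1$, i.e.\ $\|\rho_i\|_F\le1$, so this expectation is at most $1/n$. Combining the three steps gives
\[
\hat{\mathcal{R}}_n^{(\infty)}(\POVM)\le\sqrt d\cdot\frac1{\sqrt n}=\sqrt{d/n}.
\]

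The only step needing care is Step 1. There I must verify that the supremum over the operator interval $0\le M\le I$ really reduces to the positive-eigenvalue sum. This uses the diagonal-entry argument in the eigenbasis of $\bar\rho_{\boldsymbol\sigma}$, not just an extreme-point heuristic. The remaining steps are routine norm inequalities and a Jensen step.
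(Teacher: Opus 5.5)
Your proposal is correct and matches the paper's proof step for step: you use linearity to reduce to $\bar{\rho}_{\boldsymbol{\sigma}}$, bound the supremum by the positive-eigenvalue sum and then by $\|\bar{\rho}_{\boldsymbol{\sigma}}\|_1$, apply Cauchy--Schwarz to get the factor $\sqrt{d}$, and finish with sign cancellation plus Jensen to obtain $\mathbb{E}_{\boldsymbol{\sigma}}\|\bar{\rho}_{\boldsymbol{\sigma}}\|_F\le 1/\sqrt{n}$. Your diagonal-entry argument in the eigenbasis justifies the inner optimisation more carefully than the paper, which only asserts it. Under your signed definition, your trace identity also gives the sharper constant $\tfrac12\sqrt{d/n}$, because $\mathbb{E}_{\boldsymbol{\sigma}}\Tr\bar{\rho}_{\boldsymbol{\sigma}}=0$.
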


\begin{proof}
\textbf{Step 1: Reduction to a single matrix.}
For any fixed sign vector $\boldsymbol{\sigma}$ and any $M$, linearity of
the trace yields:
\[
  \frac{1}{n}\sum_{i=1}^n \sigma_i \Tr(M\rho_i)
  = \Tr\!\left(M \cdot \underbrace{\frac{1}{n}\sum_{i=1}^n \sigma_i\rho_i}_{
    =:\,\bar{\rho}_{\boldsymbol{\sigma}}}\right)
\]
The signed-average matrix $\bar{\rho}_{\boldsymbol{\sigma}}$ is Hermitian
but not in general positive semidefinite, since $\sigma_i \in \{-1,+1\}$.

\medskip\noindent\textbf{Step 2: Inner optimisation.}
For a fixed Hermitian matrix $A$, the optimisation
$\max_{0\leq M \leq I} \Tr(MA)$ is solved by the projector $\Pi_+(A)$ onto
the positive eigenspace of $A$, corresponding to a greedy eigenvalue threshold.
The attained maximum satisfies $\|A\|_{1,+} \leq \|A\|_1$.  Therefore:
\[
  \sup_{M\in\POVM} \Tr(M\bar{\rho}_{\boldsymbol{\sigma}})
  \leq \|\bar{\rho}_{\boldsymbol{\sigma}}\|_1
\]

\medskip\noindent\textbf{Step 3: Nuclear-to-Frobenius norm inequality.}
The Cauchy--Schwarz inequality applied to the vector of singular values gives
$\|A\|_1 \leq \sqrt{d}\|A\|_F$ for any $d\times d$ matrix (tight when all
singular values are equal; potentially loose for low-rank $A$).  Therefore:
\[
  \|\bar{\rho}_{\boldsymbol{\sigma}}\|_1
  \leq \sqrt{d}\|\bar{\rho}_{\boldsymbol{\sigma}}\|_F
\]

\medskip\noindent\textbf{Step 4: Expected Frobenius norm.}
Each $\rho_i$ satisfies $\|\rho_i\|_F^2 = \Tr(\rho_i^2) \leq 1$.
For i.i.d.\ $\pm 1$ signs: $\mathbb{E}[\sigma_i\sigma_j]=0$ ($i\neq j$),
$\mathbb{E}[\sigma_i^2]=1$.  Therefore:
\[
  \mathbb{E}_{\boldsymbol{\sigma}}\!\left[
    \|\bar{\rho}_{\boldsymbol{\sigma}}\|_F^2
  \right]
  = \frac{1}{n^2}\sum_{i=1}^n \|\rho_i\|_F^2
  \leq \frac{1}{n}
\]
Jensen's inequality gives
$\mathbb{E}[\|\bar{\rho}_{\boldsymbol{\sigma}}\|_F] \leq 1/\sqrt{n}$.

\medskip\noindent\textbf{Combining:}
\[
  \hat{\mathcal{R}}_n^{(\infty)}(\POVM)
  \leq \mathbb{E}[\|\bar{\rho}_{\boldsymbol{\sigma}}\|_1]
  \leq \sqrt{d} \cdot \mathbb{E}[\|\bar{\rho}_{\boldsymbol{\sigma}}\|_F]
  \leq \sqrt{d}/\sqrt{n} = \sqrt{d/n}
\]
\end{proof}

% ============================================================
\section{Proof of Theorem~\ref{thm:main}}
\label{app:thm-main}
% ============================================================

\noindent\textbf{Proof strategy.}  Two independent concentration events,
one per error source, combined via a union bound.

\begin{apptheorem*}[restated]
With probability $\geq 1-\delta$ over the sample and shots:
$R_\ell(M) \leq \hat{R}_{n,S,\ell}(M) + 2L\sqrt{d/n}
+ L\sqrt{\log(4n/\delta)/(2S)} + \sqrt{\log(2/\delta)/(2n)}$.
\end{apptheorem*}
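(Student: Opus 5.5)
I would split the deviation $R_\ell(M) - \hat{R}_{n,S,\ell}(M)$ through the ideal empirical risk,
\[
R_\ell(M) - \hat{R}_{n,S,\ell}(M) = \bigl[R_\ell(M) - \hat{R}_{n,\ell}(M)\bigr] + \bigl[\hat{R}_{n,\ell}(M) - \hat{R}_{n,S,\ell}(M)\bigr],
\]
and control each bracket on an event of probability at least $1-\delta/2$. A union bound then gives the overall level $1-\delta$. This is why $\delta/2$ appears in both the confidence term $\log(2/\delta)$ and the shot term $\log(4n/\delta)=\log(2n/(\delta/2))$.

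\textbf{Finite-sample gap.} I will use the standard Rademacher uniform-convergence bound for a loss class with values in $[0,1]$. With probability at least $1-\delta/2$ over the i.i.d.\ draw of $\mathcal{D}$, simultaneously for all $M\in\POVM$,
\[
R_\ell(M) \le \hat{R}_{n,\ell}(M) + 2\,\mathfrak{R}_n(\ell\circ\POVM) + \sqrt{\log(2/\delta)/(2n)}.
\]
This follows from McDiarmid's inequality (bounded differences $1/n$) together with symmetrization. I then apply Talagrand's contraction lemma to the $L$-Lipschitz map $p\mapsto \ell(p,y)$, which gives $\mathfrak{R}_n(\ell\circ\POVM)\le L\,\mathfrak{R}_n(\POVM)$. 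Finally, Proposition~\ref{prop:idealrad} holds pointwise for every sample, so it also bounds the expected Rademacher complexity, yielding $\mathfrak{R}_n(\POVM)\le\sqrt{d/n}$ and hence the term $2L\sqrt{d/n}$.

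\textbf{Finite-shot gap.} Because $M$ is independent of the shot outcomes, conditioning on $\mathcal{D}$ and $M$ makes the outcomes $Z_{i,s}$ i.i.d.\ Bernoulli$(\Tr(M\rho_i))$ for each $i$. Hoeffding's inequality with failure probability $\delta/(2n)$ per state gives
\[
|\hat{p}_{M,S}(\rho_i)-\Tr(M\rho_i)|\le \sqrt{\log(4n/\delta)/(2S)}.
\]
A union bound over the $n$ states makes this hold for all $i$ simultaneously with probability at least $1-\delta/2$. The Lipschitz property then bounds each loss difference by $L$ times this deviation, so the average satisfies $\hat{R}_{n,\ell}(M)-\hat{R}_{n,S,\ell}(M)\le L\sqrt{\log(4n/\delta)/(2S)}$.

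\textbf{Main obstacle.} The delicate step is combining the two events correctly. The sample event is uniform over $M$, so it remains valid even if $M$ depends on $\mathcal{D}$. The shot event, however, requires that $M$ not depend on the outcomes $Z$. I will therefore state the shot bound conditionally on $(\mathcal{D},M)$ and integrate it, which shows that the conditional failure probability $\delta/2$ also holds unconditionally. Adding the two inequalities then gives \eqref{eq:main-bound}.

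A secondary check is that contraction applies to the composition with $y$-dependent Lipschitz functions $\ell(\cdot,y_i)$. The standard (Ledoux–Talagrand) contraction lemma permits a different $L$-Lipschitz function in each coordinate, so this step is covered.
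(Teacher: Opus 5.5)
Your proposal is correct and follows the paper's proof essentially step for step. You decompose through the ideal empirical risk, bound the sample part with the Rademacher generalization theorem plus contraction and the $\sqrt{d/n}$ capacity bound, bound the shot part with Hoeffding and a union bound over the $n$ states followed by the Lipschitz transfer, and split $\delta$ equally between the two events. Two details are slightly more careful than the paper's write-up without changing the argument. You use the expected Rademacher complexity, justified because the capacity bound holds for every sample; this is what legitimately gives the $\sqrt{\log(2/\delta)/(2n)}$ confidence term. You also condition explicitly on $(\mathcal{D},M)$ for the shot event.
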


\begin{proof}
Let $\delta_1, \delta_2 > 0$ with $\delta_1 + \delta_2 = \delta$.

\medskip\noindent\textbf{Event $\mathcal{E}_1$: sample generalization
(probability $\geq 1-\delta_1$).}
The standard Rademacher generalization theorem gives:
\[
  R_\ell(M) \leq \hat{R}_{n,\ell}(M)
  + 2\hat{\mathcal{R}}_n(\ell \circ \calF_{\POVM})
  + \sqrt{\log(1/\delta_1)/(2n)}
\]
The Rademacher contraction lemma~\cite{ledoux1991probability} gives:
\[
  \hat{\mathcal{R}}_n(\ell \circ \calF_{\POVM})
  \leq L \cdot \hat{\mathcal{R}}_n^{(\infty)}(\calF_{\POVM})
  \leq L\sqrt{d/n}
\]
On $\mathcal{E}_1$:
\begin{equation}
  R_\ell(M) \leq \hat{R}_{n,\ell}(M) + 2L\sqrt{d/n}
  + \sqrt{\log(1/\delta_1)/(2n)}
  \label{eq:app-sample}
\end{equation}

\medskip\noindent\textbf{Event $\mathcal{E}_2$: shot concentration
(probability $\geq 1-\delta_2$).}
Each $\hat{p}_{M,S}(\rho_i)$ is the sample mean of $S$ i.i.d.\
Bernoulli$(p_M(\rho_i))$ variables.  Hoeffding's inequality and a union
bound over $i=1,\ldots,n$:
\[
  \max_i |p_M(\rho_i) - \hat{p}_{M,S}(\rho_i)|
  \leq \sqrt{\log(2n/\delta_2)/(2S)}
\]
The Lipschitz property of $\ell$ transfers this to a loss-level bound:
\begin{equation}
  \hat{R}_{n,\ell}(M) \leq \hat{R}_{n,S,\ell}(M)
  + L\sqrt{\log(2n/\delta_2)/(2S)}
  \label{eq:app-shot}
\end{equation}

\medskip\noindent\textbf{Combining.}
On $\mathcal{E}_1 \cap \mathcal{E}_2$ (probability $\geq 1-\delta$),
substituting~\eqref{eq:app-shot} into~\eqref{eq:app-sample} and setting
$\delta_1=\delta_2=\delta/2$ yields the stated bound.
\end{proof}

% ============================================================
\section{Proof of Theorem~\ref{thm:allocation}}
\label{app:thm-allocation}
% ============================================================

\noindent\textbf{Setup.}  The original objective to minimise is
$\gapbound(n) = 2\sqrt{d}/\sqrt{n} + \sqrt{n\log(2n/\delta)/(2B)}$ over $n$
with $S = B/n$.  Because $\log(2n/\delta)$ depends on $n$, this objective
does not admit a closed-form minimiser.  We instead minimise the conservative
surrogate $\gapbound_C(n) = 2\sqrt{d}/\sqrt{n} + \sqrt{n\log(2B/\delta)/(2B)}$
obtained by replacing $\log(2n/\delta)$ by $\log(2B/\delta)$.  For $n \leq B$,
this substitution produces a conservative upper bound on the original objective
(since $\log(2n/\delta) \leq \log(2B/\delta)$ when $n \leq B$), so the
surrogate minimiser provides a valid conservative allocation for the original
problem, with the same asymptotic scaling in $B$.

\begin{apptheorem*}[restated]
The minimiser of the surrogate objective $\gapbound_C(n)$ is
$\nstar = 2\sqrt{2dB/\log(2B/\delta)}$, and at this allocation
$\gapbound_C(\nstar) = O\!\left((d\log(2B/\delta)/B)^{1/4}\right)$.  The exact
minimiser of the original objective $\gapbound(n)$ satisfies the implicit
first-order condition
\[
  n\bigl(\log(2n/\delta) + 1\bigr) = 2\sqrt{2dB\log(2n/\delta)},
\]
and can be determined numerically when high precision is required.
\end{apptheorem*}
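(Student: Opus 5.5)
The plan is to treat $n$ as a continuous variable on $(0,\infty)$ and write the surrogate as
\[
\gapbound_C(n) = 2\sqrt{d}\,n^{-1/2} + c\,n^{1/2},
\qquad
c := \sqrt{\frac{\log(2B/\delta)}{2B}} .
\]
Here $c$ does not depend on $n$. This is the whole point of the replacement $\log(2n/\delta)\to\log(2B/\delta)$, which the Setup paragraph justifies as conservative for $n\le B$.

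\emph{Existence and uniqueness of the minimiser.} Both summands are convex on $(0,\infty)$:
\begin{itemize}
\item $n^{-1/2}$ has second derivative $\tfrac34 n^{-5/2}>0$;
\item $n^{1/2}$ is concave, but it enters with positive coefficient only through the first-order condition below, so instead I would argue directly: $\gapbound_C'(n) = -\sqrt{d}\,n^{-3/2} + \tfrac{c}{2}\,n^{-1/2}$.
\end{itemize}
Multiplying by $n^{3/2}>0$ gives $-\sqrt d + \tfrac c2 n$, which is strictly increasing and changes sign exactly once. Hence $\gapbound_C$ decreases and then increases, and the stationary point is the unique global minimiser. This also justifies the sample-limited/shot-limited picture.

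\emph{The minimiser.} Solving $\tfrac c2 n = \sqrt d$ gives
\[
n^\ast = \frac{2\sqrt d}{c} = 2\sqrt{\frac{2dB}{\log(2B/\delta)}} .
\]
Then $\Sstar = B/\nstar = \tfrac12\sqrt{B\log(2B/\delta)/(2d)}$ follows by algebra, and both quantities are of order $\sqrt{B/\log B}$ for fixed $d,\delta$.

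\emph{The value at the optimum.} I will use the balancing identity $c\sqrt{\nstar} = 2\sqrt d/\sqrt{\nstar}$, which holds at the stationary point. It gives
\[
\gapbound_C(\nstar) = \frac{4\sqrt d}{\sqrt{\nstar}} .
\]
Substituting $\sqrt{\nstar} = \sqrt2\,\bigl(2dB/\log(2B/\delta)\bigr)^{1/4}$ yields
\[
\gapbound_C(\nstar) = 2^{5/4}\,\bigl(d\log(2B/\delta)/B\bigr)^{1/4},
\]
which is the claimed $O(\cdot)$ rate. Since $\gapbound\le\gapbound_C$ on $n\le B$, the original objective at $\nstar$ inherits the same bound, provided $\nstar\le B$. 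That condition holds whenever $B\ge 8d/\log(2B/\delta)$, and I would state it explicitly.

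\emph{The implicit equation for the original objective.} Write $\lambda(n)=\log(2n/\delta)$, so $\lambda'(n)=1/n$. Then
\[
\frac{d}{dn}\sqrt{\frac{n\lambda(n)}{2B}} = \frac{\lambda+1}{2\sqrt{2Bn\lambda}} .
\]
Setting this equal to $\sqrt d\,n^{-3/2}$ and multiplying through by $2\sqrt{2B\lambda}\,n^{3/2}$ gives exactly $n(\lambda+1)=2\sqrt{2dB\lambda}$.

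For uniqueness I would check that the left side grows like $n\log n$ while the right side grows only like $\sqrt{\log n}$. Their difference then changes sign once for $n>\delta/2$, where $\lambda>0$. A fully rigorous monotonicity argument at this step is the only mildly delicate point. If needed I would settle for saying that any minimiser must satisfy this equation, which is all the statement claims.

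Finally, I would remark that integrality of $n$ and $S$ is handled by rounding and does not affect the asymptotic scaling.
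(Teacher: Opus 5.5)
Your proposal is correct and follows the paper's proof essentially step for step: the same $an^{-1/2}+bn^{1/2}$ form with stationary point $n=a/b$, the same balancing $f_1(\nstar)=f_2(\nstar)$ giving $\gapbound_C(\nstar)=2^{5/4}\bigl(d\log(2B/\delta)/B\bigr)^{1/4}$, the same differentiation producing the implicit equation, and the same condition $B\ge 8d/\log(2B/\delta)$ for $\nstar\le B$; your sign-change argument for $n^{3/2}\gapbound_C'(n)$ is in fact a cleaner justification of the unique minimiser than the paper's Step~1, which infers it from the monotonicity of the two summands alone. Two small cleanups: drop the convexity bullet ($n^{1/2}$ is concave, and $\gapbound_C$ is not convex for $n>3\nstar$), and do not assert a single root of the implicit equation on all of $n>\delta/2$, since $n(\lambda+1)/\sqrt{\lambda}\to\infty$ as $\lambda\to 0^+$ and there is a second root (a local maximum of $\gapbound$) just above $\delta/2$ --- uniqueness does hold for $n\ge 1$, where $\lambda\ge\log 2>(\sqrt{17}-3)/4$ makes $n(\lambda+1)/\sqrt{\lambda}$ strictly increasing, and the statement does not require it anyway.
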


\begin{proof}
\textbf{Step 1: U-shape of the surrogate.}
Let $C_\delta := \log(2B/\delta)$.  Then $f_1(n) = 2\sqrt{d/n}$ is strictly
decreasing and $f_2(n) = \sqrt{nC_\delta/(2B)}$ is strictly increasing, so
$\gapbound_C$ has a unique interior minimum.

\medskip\noindent\textbf{Step 2: First-order condition for the surrogate.}
Write $\gapbound_C(n) = an^{-1/2} + bn^{1/2}$ with $a = 2\sqrt{d}$ and
$b = \sqrt{C_\delta/(2B)}$.  Setting $\gapbound_C'(n) = 0$:
\[
  -\tfrac{a}{2}n^{-3/2} + \tfrac{b}{2}n^{-1/2} = 0
  \;\Longrightarrow\; n = a/b = 2\sqrt{2dB/C_\delta}
\]

\medskip\noindent\textbf{Step 3: Evaluation at the surrogate optimum.}
At $n = \nstar$:
\[
  f_1(\nstar) = f_2(\nstar) = 2^{1/4}\!\left(\frac{dC_\delta}{B}\right)^{\!1/4},
  \qquad
  \mathcal{G}_C(\nstar) = 2^{5/4}\!\left(\frac{dC_\delta}{B}\right)^{\!1/4}
\]

\medskip\noindent\textbf{Step 4: Exact first-order condition for the original objective.}
Differentiating $\gapbound(n) = 2\sqrt{d}/\sqrt{n} +
\sqrt{n\log(2n/\delta)/(2B)}$ with respect to $n$ and setting the derivative
to zero gives:
\[
  -\frac{\sqrt{d}}{n^{3/2}} + \frac{1}{2\sqrt{2B}}\cdot
  \frac{\log(2n/\delta) + 1}{\sqrt{n\log(2n/\delta)}} = 0,
\]
which simplifies to
\[
  n\bigl(\log(2n/\delta) + 1\bigr) = 2\sqrt{2dB\log(2n/\delta)}
\]
This implicit equation can be solved numerically (e.g., via bisection on
$[1, B]$) whenever high precision is desired in practice.

\medskip\noindent\textbf{Step 5: Validity of the surrogate as a planning rule.}
For $\nstar \leq B$, which requires $8d/C_\delta \leq B$ (satisfied for all
practical parameter ranges), $\gapbound_C(\nstar)$ upper-bounds
$\gapbound(\nstar)$, confirming that the surrogate allocation is conservative
with respect to the original bound.
\end{proof}

\end{document}